
\documentclass[11pt,a4paper]{article}
\usepackage{amssymb,amsmath,latexsym}

\usepackage{fullpage}
\usepackage{amsfonts}
\usepackage{graphicx}
\usepackage[authoryear]{natbib}
\usepackage[font=small,format=plain,labelfont=bf,up]{caption}
\renewcommand{\citename}{\citet}
\renewcommand{\cite}{\citep}
\graphicspath{{./figures/}}

\usepackage{amsthm}
\usepackage{sg-macros}

\setlength{\parindent}{0pt}
\setlength{\parskip}{5pt}

\usepackage{authblk}
\usepackage{url}

\begin{document}
\title{On Differentiating Parameterized Argmin and Argmax Problems
with Application to Bi-level Optimization}
\author[1,2]{Stephen~Gould}
\author[1,3]{Basura~Fernando}
\author[1,3]{Anoop~Cherian}
\author[1,2]{Peter~Anderson}
\author[1,2]{Rodrigo~Santa~Cruz}
\author[1,2]{Edison~Guo}
\affil[1]{Australian Centre for Robotic Vision}
\affil[2]{ Research School of Computer Science, ANU}
\affil[3]{ Research School of Engineering, ANU \authorcr
  {\small \texttt{<firstname.lastname>@anu.edu.au}}
}
\maketitle

\begin{abstract}
Some recent works in machine learning and computer vision involve the
solution of a bi-level optimization problem. Here the solution of a
parameterized lower-level problem binds variables that appear in the
objective of an upper-level problem. The lower-level problem typically
appears as an $\argmin$ or $\argmax$ optimization problem. Many
techniques have been proposed to solve bi-level optimization problems,
including gradient descent, which is popular with current end-to-end
learning approaches. In this technical report we collect some results
on differentiating $\argmin$ and $\argmax$ optimization problems with
and without constraints and provide some insightful motivating
examples.
\end{abstract}

\section{Introduction}

Bi-level optimization has a long history of study dating back to the
early 1950s and investigation of the so-called Stackelberg
model~\cite{Bard:1998}. In this model, two players---a market leader
and a market follower---compete for profit determined by the price
that the market is willing to pay as a function of total goods
produced and each player's production cost. Recently, bi-level
optimization problems have found application in machine learning and
computer vision where they have been applied to parameter and
hyper-parameter learning~\cite{Do:NIPS07, Domke:AISTATS12,
  Klatzer:CVWW2015}, image denoising~\cite{Samuel:CVPR09, Ochs:2015},
and most recently, video activity recognition~\cite{Fernando:ICML2016,
  Fernando:CVPR2016}.

A bi-level optimization problem consists of an upper problem and a
lower problem. The former defines an objective over two sets of
variables, say $\bx$ and $\by$. The latter binds $\by$ as a function
of $\bx$, typically by solving a minimization problem. Formally, we
can write the problem as
\begin{align}
  \begin{array}{ll}
    \text{minimize}_{\bx} & f^{U}(\bx, \by) 
    \\
    \text{subject to} & \by \in \argmin_{\by'} f^{L}(\bx, \by')
  \end{array}
  \label{eqn:bilevel}
\end{align}
where $f^U$ and $f^L$ are the upper- and lower-level objectives,
respectively. As can be seen from the structure of the problem, the
lower-level (follower) optimizes its objective subject to the value of
the upper-level variable $\bx$. The goal of the upper-level (leader)
is to choose $\bx$ (according to its own objective) knowing that the
lower-level will follow optimally.

In one sense the $\argmin$ appearing in the lower-level problem is
just a mathematical function and so bi-level optimization can be
simply viewed as a special case of constrained optimization. In
another sense it is useful to consider the structure of the problem
and study bi-level optimization in its own right, especially when the
$\argmin$ cannot be computed in closed-form. As such, many techniques
have been proposed for solving different kinds of bi-level
optimization problems~\cite{Bard:1998, Dempe:2015}. In this technical
report we focus on first-order gradient based techniques, which have
become very important in machine learning and computer vision with the
wide spread adoption of deep neural network
models~\cite{Lecun:Nature15, Schmidhuber:NN15, Krizhevsky:NIPS12}.

Our main aim is to collect results on differentiating parameterized
$\argmin$ and $\argmax$ problems. Some of these results have appeared
in one form or another in earlier works, \eg\citename[\S
  5.6.2--5.6.3]{Faugeras:1993} considers the case of unconstrained and
equality constrained $\argmin$ problems when analysing uncertainty in
recovering 3D geometry. However, with the growth in popularity of deep
learning we feel it important to revisit the results (and present
examples) in the context of first-order gradient procedures for
solving bi-level optimization problems.

We begin with a brief overview of methods for solving bi-level
optimization problems to motivate our results in
\secref{sec:background}. We then consider unconstrained variants of
the lower-level problem, either $\argmin$ or $\argmax$ (in
\secref{sec:unconstrained}), and then extend the results to problems
with equality and inequality constraints (in
\secref{sec:constrained}). We include motivating examples with
gradient calculations and discussion throughout the paper leading to a
small bi-level optimization example for learning a novel softmax
classifier in \secref{sec:bilevel_example}. Examples are accompanied
by supplementary Python code.\footnote{Available for download from
  \url{http://users.cecs.anu.edu.au/sgould/}.}

\section{Background}
\label{sec:background}

The canonical form of a bi-level optimization problem is shown in
\eqnref{eqn:bilevel}. There are three general approaches to solving
such problems that have been proposed over the years. In the first
approach an analytic solution is found for the lower-level problem,
that is, an explicit function $\by^\star(\bx)$ that when evaluated
returns an element of $\argmin_{\by} f^{L}(\bx, \by)$. If such a
function can be found then we are in luck because we now simply solve
the single-level problem
\begin{align}
  \begin{array}{ll}
    \text{minimize}_{\bx} & f^{U}(\bx, \by^\star(\bx)) 
  \end{array}
\end{align}
Of course this problem, may itself be difficult to solve. Moreover, it
is not always the case that an analytic solution for the lower-level
problem will exist.

The second general approach to solving bi-level optimization problems
is to replace the lower-level problem with a set of sufficient
conditions for optimiality (\eg the KKT conditions for a convex
lower-level problem). If we think of these conditions being
encapsulated by the function $h^{L}(\bx, \by) = 0$ then we can solve the
following constrained problem instead of the original,
\begin{align}
  \begin{array}{ll}
    \text{minimize}_{\bx, \by} & f^{U}(\bx, \by) \\
    \text{subject to} & h^{L}(\bx, \by) = 0
  \end{array}
\end{align}
The main difficulty here is that the sufficient conditions may be hard
to express and the resulting problem hard to solve. Indeed, even if
the lower-level problem is convex, the resulting constrained problem
may not be.

The third general approach to solving bi-level optimization problems
is via gradient descent on the upper-level objective. The key idea is
to compute the gradient of the solution to the lower-level problem
with respect to the variables in the upper-level problem and perform
updates of the form
\begin{align}
  x &\leftarrow x - \eta \left. \left(\frac{\partial f^U}{\partial x}
  + \frac{\partial f^U}{\partial y} \frac{\partial y}{\partial
    x}\right) \right|_{(x, y^\star)}
\end{align}
In this respect the approach appears similar to the first
approach. However, now the function $\by^\star(\bx)$ does not need to
be found explicitly. All that we require is that the lower-level
problem be efficiently solveable and that a method exists for finding
the gradient at the current solution.\footnote{Note that we have made
  no assumption about the uniqueness of $y^\star$ nor the convexity of
  $f^L$ or $f^U$. When multiple minima of $f^L$ exist care needs to be
  taken during iterative gradient updates to select consistent
  solutions or when jumping between modes. However, these
  considerations are application dependent and do not invalidate any
  of the results included in this report.}

This last approach is important in the context large-scale and
end-to-end machine learning applications where first-order
(stochastic) gradient methods are often the preferred method. This
then motivates the results included in this technical report, \ie
computing the gradients of parameterized $\argmin$ and $\argmax$
optimization problems where the parameters are to be optimized for
some external objective or are themselves the output of some other
parameterized function to be learned.

\section{Unconstrained Optimization Problems}
\label{sec:unconstrained}

We begin by considering the gradient of the scalar function $g(x) =
\argmin_{y \in \reals} f(x, y)$ and follow an approach similar to the
implicit differentiation derivation that appears in earlier works
(\eg\cite{Samuel:CVPR09, Faugeras:1993}). In all of our results we
assume that the minimum (or maximum) over $y$ of the function $f(x,
y)$ exists over the domain of $x$. When the minimum (maximum) is not
unique then $g(x)$ can be taken as any one of the minimum (maximum)
points. Moreover, we do not need for $g(x)$ to have a closed-form
representation.

\begin{lemma}
Let $f: \reals \times \reals \rightarrow \reals$ be a continuous
function with first and second derivatives. Let $g(x) = \argmin_{y}
f(x, y)$. Then the derivative of $g$ with respect to $x$ is
\begin{align*}
  \frac{dg(x)}{dx} &= - \frac{f_{XY}(x, g(x))}{f_{YY}(x, g(x))}
\end{align*}
where $f_{XY} \doteq \frac{\partial^2f}{\partial x \partial y}$ and
$f_{YY} \doteq \frac{\partial^2f}{\partial y^2}$.
\label{lem:scalar_argmin}
\end{lemma}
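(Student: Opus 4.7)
The plan is to use implicit differentiation of the first-order optimality condition, which is the standard route for results of this form and matches the reference to Faugeras cited just above the statement.

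First I would observe that, since $g(x)$ is an interior unconstrained minimizer of $f(x, \cdot)$ and $f$ is twice differentiable, the first-order optimality condition gives the identity
\begin{align*}
  f_Y(x, g(x)) \;=\; 0
\end{align*}
holding for all $x$ in the domain of interest, where $f_Y \doteq \partial f / \partial y$. The idea is that this is a functional identity in $x$, not just a condition at a single point, so we can differentiate both sides with respect to $x$.

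Next I would apply the chain rule to the left-hand side, treating $g(x)$ as a differentiable function of $x$. This yields
\begin{align*}
  f_{YX}(x, g(x)) \;+\; f_{YY}(x, g(x)) \, \frac{dg(x)}{dx} \;=\; 0,
\end{align*}
and solving for $dg/dx$ produces exactly the claimed formula, after using $f_{XY} = f_{YX}$ by equality of mixed partials (which is available because $f$ is assumed to have continuous second derivatives).

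The main obstacle, and the point that deserves care rather than routine calculation, is justifying two things: (i) that $g$ is actually differentiable at $x$, and (ii) that the division by $f_{YY}(x, g(x))$ is legitimate. Both are handled by invoking the implicit function theorem applied to the equation $f_Y(x, y) = 0$ at the point $(x, g(x))$: the theorem requires the partial derivative with respect to $y$, namely $f_{YY}(x, g(x))$, to be nonzero, and under this assumption it delivers a locally unique, continuously differentiable branch $y = g(x)$ together with the differentiation formula above. Since $g(x)$ is a minimizer we generically have $f_{YY}(x, g(x)) \ge 0$; strict positivity (the genuinely restrictive hypothesis, tacit in the lemma statement) is what the proof needs, and I would state this explicitly before invoking the implicit function theorem. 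Everything else is mechanical.
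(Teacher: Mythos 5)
Your proposal is correct and follows essentially the same route as the paper: implicit differentiation of the stationarity identity $f_Y(x,g(x))=0$ via the chain rule, then solving for $dg/dx$. The only difference is that you explicitly justify differentiability of $g$ and the nonvanishing of $f_{YY}$ via the implicit function theorem, which the paper leaves tacit; this is a welcome addition, not a divergence in method.
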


\begin{proof}
\begin{align}
  \left. \frac{\partial f(x, y)}{\partial y} \right|_{y = g(x)} 
  &= 0
  & \text{(since $g(x) = \argmin_y f(x, y)$)}
  \\
  \therefore \, \frac{d}{dx} \frac{\partial f(x, g(x))}{\partial y}
  &= 0
  & \text{(differentiating lhs and rhs)}
\end{align}

But, by the chain rule,
\begin{align}
\frac{d}{dx} \left(\frac{\partial f(x, g(x))}{\partial y}\right)
&= \frac{\partial^2 f(x, g(x))}{\partial x \partial y} + \frac{\partial^2 f(x, g(x))}{\partial y^2} \frac{dg(x)}{dx}
\end{align}

Equating to zero and rearranging gives the desired result
\begin{align}
\frac{dg(x)}{dx}
&= - \left(\frac{\partial^2 f(x, g(x))}{\partial y^2}\right)^{-1} \frac{\partial^2 f(x, g(x))}{\partial x \partial y}
\\
&= - \frac{f_{XY}(x, g(x))}{f_{YY}(x, g(x))}
\end{align}
\end{proof}

We now extend the above result to the case of optimizing over
vector-valued arguments.

\begin{lemma}
Let $f: \reals \times \reals^n \rightarrow \reals$ be a continuous
function with first and second derivatives. Let $\bg(x) = \argmin_{\by \in \reals^n}
f(x, \by)$. Then the vector derivative of $\bg$ with respect to $x$ is
\begin{align*}
  \bg'(x) &= - f_{YY}(x, \bg(x))^{-1} f_{XY}(x, \bg(x)).
\end{align*}
where $f_{YY} \doteq \nabla^{2}_{\by\by} f(x, \by) \in \reals^{n \times n}$ and 
$f_{XY} \doteq \frac{\partial}{\partial x} \nabla_{\by} f(x, \by) \in \reals^{n}$.
\label{lem:vector_argmin}
\end{lemma}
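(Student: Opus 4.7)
The plan is to mirror the scalar argument of Lemma~\ref{lem:scalar_argmin} but with $\by$ now a vector, so that the first-order optimality condition becomes a system of $n$ equations rather than a single equation, and the chain rule produces a matrix–vector identity.

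First I would write down the first-order optimality condition for the inner problem: since $\bg(x)$ minimizes $f(x, \cdot)$ over $\reals^n$, we have
\begin{align}
\nabla_\by f(x, \by)\Big|_{\by = \bg(x)} \;=\; \mathbf{0} \in \reals^n.
\end{align}
This is a vector identity in $x$ and therefore can be differentiated component-wise with respect to $x$. Next I would apply the chain rule to the composition $x \mapsto \nabla_\by f(x, \bg(x))$. The partial derivative with respect to the explicit $x$-slot produces $f_{XY}(x, \bg(x)) \in \reals^n$, while the chain through $\bg(x)$ produces $\nabla^2_{\by\by} f(x, \bg(x)) \, \bg'(x) = f_{YY}(x, \bg(x))\, \bg'(x) \in \reals^n$. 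Setting the total derivative to zero yields
\begin{align}
f_{XY}(x, \bg(x)) \;+\; f_{YY}(x, \bg(x))\, \bg'(x) \;=\; \mathbf{0}.
\end{align}

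Finally I would solve this linear system for $\bg'(x)$. Provided $f_{YY}(x, \bg(x))$ is invertible, left-multiplying by its inverse gives the claimed expression
\begin{align}
\bg'(x) \;=\; -\, f_{YY}(x, \bg(x))^{-1}\, f_{XY}(x, \bg(x)).
\end{align}

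The one conceptual step that merits comment, rather than a serious obstacle, is the invertibility of $f_{YY}$: at a strict local minimum the Hessian is positive definite and hence invertible, and this is the natural regularity hypothesis under which the implicit function theorem applies. The computation itself is the direct vectorization of the scalar proof and no new ideas are required; the bookkeeping is just ensuring that $f_{YY}$ is treated as an $n\times n$ matrix acting on the column vector $\bg'(x) \in \reals^n$, with $f_{XY}$ also a column vector of length $n$, matching the dimensions declared in the statement.
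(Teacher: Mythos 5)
Your proposal is correct and follows essentially the same route as the paper's own proof: differentiate the stationarity condition $\nabla_{\by} f(x,\bg(x)) = 0$ with respect to $x$ via the chain rule and solve the resulting linear system for $\bg'(x)$. Your added remark on the invertibility of $f_{YY}$ is a reasonable clarification but does not change the argument.
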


\begin{proof}
Similar to Lemma~\ref{lem:scalar_argmin}, we have:
\begin{align}
f_{Y}(x, \bg(x)) \doteq {\nabla_{Y} f(x, \by)}|_{\by=\bg(x)} &= 0 \\
\frac{d}{dx} f_{Y}(x, \bg(x)) &= 0 \\
\therefore f_{XY}(x, \bg(x)) + f_{YY}(x, \bg(x)) \bg'(x) &= 0 \\
\frac{d}{dx}\bg(x) = - {f_{YY}(x, \bg(x))}^{-1} f_{XY}(x, \bg(x))
\end{align}
\end{proof}

The above lemma assumes that $\bg(x)$ is parameterized by a single
scalar value $x \in \reals$. It is trivial to extend the result to
multiple parameters $\bx = (x_1, \ldots, x_m)$ by performing the
derivative calculation for each parameter separately. That is,
\begin{align}
  \nabla_{\bx} \bg(x_1, \ldots, x_m) &= - f_{YY}(\bx, \bg(\bx))^{-1}
  \left[ \begin{matrix}
      f_{X_{1}Y}(\bx, \bg(\bx))
      & \cdots &
      f_{X_{m}Y}(\bx, \bg(\bx))
    \end{matrix} \right]
\end{align}

Note that the main computational challenge of the inverting the $n
\times n$ matrix $f_{YY}$ (or decomposing it to facilitate solving
each system of $n$ linear equations) only needs to be done once and
can then be reused for the derivative with respect to each
parameter. Thus the overhead of computing gradients for multiple
parameters is small compared to the cost of computing the gradient for
just one parameter. Of course if $\bx$ (or $\bg(\bx)$) is changed (\eg
during bi-level optimization) then $f_{YY}$ will be different and its
inverse recalculated.

So far we have only considered minimization problems. However,
studying the proofs above we see that they do not require that
$\bg(x)$ be a local-minimum point; any stationary point will
suffice. Thus, the result extends to the case of $\argmax$ problems as
well.

\begin{lemma}
Let $f: \reals \times \reals^n \rightarrow \reals$ be a continuous
function with first and second derivatives. Let $\bg(x) = \argmax_{\by \in \reals^n}
f(x, \by)$. Then the vector derivative of $\bg$ with respect to $x$ is
\begin{align*}
  \bg'(x) &= - f_{YY}(x, \bg(x))^{-1} f_{XY}(x, \bg(x)).
\end{align*}
where $f_{YY} \doteq \nabla^{2}_{\by\by} f(x, \by) \in \reals^{n \times n}$ and 
$f_{XY} \doteq \frac{\partial}{\partial x} \nabla_{\by} f(x, \by) \in \reals^{n}$.
\label{lem:vector_argmax}
\end{lemma}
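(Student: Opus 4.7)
The plan is to exploit the observation already highlighted in the paper: the proof of Lemma~\ref{lem:vector_argmin} nowhere used the fact that $\bg(x)$ is a \emph{minimizer}; it used only that $\bg(x)$ is an interior stationary point of $f(x,\cdot)$, so that the first-order condition $\nabla_{\by} f(x, \bg(x)) = 0$ holds. Since an (unconstrained) argmax is also an interior stationary point, the same first-order identity is available, and implicit differentiation with respect to $x$ then produces the same formula.

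Concretely, I would write the proof in three short steps that mirror Lemma~\ref{lem:vector_argmin}. First, invoke the first-order necessary condition for a maximum to get $f_{Y}(x, \bg(x)) = 0$. Second, differentiate both sides with respect to $x$ and apply the chain rule (treating $\bg$ as an implicit function of $x$), yielding
\begin{align*}
f_{XY}(x, \bg(x)) + f_{YY}(x, \bg(x))\, \bg'(x) &= 0.
\end{align*}
Third, solve for $\bg'(x)$ to obtain the stated expression. No sign change appears because the first-order condition is the same for maxima and minima; the distinction lives only in the second-order condition.

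As a sanity check I would sketch an alternative derivation: define $\tilde{f}(x,\by) = -f(x,\by)$ so that $\argmax_{\by} f(x,\by) = \argmin_{\by} \tilde{f}(x,\by)$, and then apply Lemma~\ref{lem:vector_argmin} directly to $\tilde{f}$. Since $\tilde{f}_{YY} = -f_{YY}$ and $\tilde{f}_{XY} = -f_{XY}$, the two minus signs cancel in the expression $-\tilde{f}_{YY}^{-1}\tilde{f}_{XY}$, recovering the formula in the lemma. This confirms that the result is independent of whether we are at an argmin or an argmax.

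The only subtlety worth flagging is the invertibility of $f_{YY}(x,\bg(x))$. For an argmin the second-order sufficient condition gives positive definiteness; for an argmax it gives negative definiteness. Either way $f_{YY}$ is nonsingular at a nondegenerate stationary point, so the inverse in the statement is well defined. This is the one place where the argmax case diverges in interpretation from the argmin case, but it does not affect the algebra, and so it is the only aspect of the proof that genuinely requires comment rather than a line-by-line copy of Lemma~\ref{lem:vector_argmin}.
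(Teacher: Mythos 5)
Your proposal is correct and follows essentially the same route as the paper, which proves the argmax case by observing that the argmin proof only uses the first-order stationarity condition $f_Y(x,\bg(x))=0$ and therefore carries over verbatim. The negation sanity check and the remark on invertibility of $f_{YY}$ are welcome additions but do not change the substance of the argument.
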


\begin{proof}
Follows from proof of Lemma~\ref{lem:vector_argmin}.
\end{proof}

\subsection{Example: Scalar Mean}

In this section we consider the simple example of finding the point
whose sum-of-squared distance to all points in the set
$\{h_i(x)\}_{i=1}^{m}$ is minimized. Writing out the problem as a
mathematical optimization problem we have $g(x) = \argmin_y f(x, y)$
where $f(x, y) = \sum_{i=1}^{m} (h_i(x) - y)^2$. Here a well-known
analytic solution exists, namely the mean $g(x) =
\frac{1}{m}\sum_{i=1}^{m} h_i(x)$.

Applying \lemref{lem:scalar_argmin} we have:
\begin{align}
f_{Y}(x, y) &=  -2 \sum_{i=1}^{m} (h_i(x) - y) \\
f_{XY}(x, y) &=  -2 \sum_{i=1}^{m} h'_i(x) \\
f_{YY}(x, y) &= 2m \\
\therefore\, g'(x) &= \frac{1}{m} \sum_{i=1}^{m} h'_i(x)
\end{align}
which agrees with the analytical solution (assuming the derivatives
$h'_i(x)$ exist).

\subsection{Example: Scalar Function with Three Local Minima}
\label{sec:local_minima_example}

The results above do not require that the function $f$ being optimized
have a single global optimal point. Indeed, as discussed above, the
results hold for any stationary point (local optima or inflection
point). In this example we present a function with up to three
stationary points and show that we can calculate the gradient with
respect to $x$ at each stationary point.\footnote{Technically the
  function that we present only has three stationary points when $x <
  -2\left(\frac{4}{3}\right)^{\frac{1}{3}}$ or $x > 0$. It has two
  stationary points at $x = -2\left(\frac{4}{3}\right)^{\frac{1}{3}}$
  and a unique stationary point elsewhere.} Consider the function
\begin{align}
  f(x, y) &= xy^4 + 2x^2y^3 - 12y^2
\end{align}
with the following partial first- and second-order derivatives
\begin{align}
  f_Y(x, y) &= 4xy^3 + 6x^2y^2 - 24 y
  \\
  f_{XY}(x, y) &= 4y^3 + 12xy^2
  \\
  f_{YY}(x, y) &= 12xy^2 + 12x^2y - 24
\end{align}

Then for any stationary point $g(x)$ of $f(x, y)$, where stationarity
is with respect to $y$ for fixed $x$, we have
\begin{align}
  g'(x) &= -\frac{g(x)^3 + 3xg(x)^2}{3xg(x)^2 + 3x^2g(x) - 6}
\end{align}

Here the gradient describes how each stationary point $g(x)$ moves
locally with an infintisimal change in $x$. If such a problem, with
multiple local minima, were to be used within a bi-level optimization
learning problem then care should be taken during each iteration to
select corresponding solution points at each iteration of the
algorithm.

The (three) stationary points occur when $\frac{\partial
  f(x,y)}{\partial y} = 0$, which (in this example) we can compute
analytically as
\begin{align}
  g(x) \in \argmin_y f(x, y) = \left\{ 0, \frac{-3x^2 \pm \sqrt{9x^4 + 96x}}{4x} \right\}
\end{align}
which we use when generating the plots below.

\figref{fig:scalar_example} shows a surface plot of $f(x, y)$ and a
slice through the surface at $x = 1$. Clearly visible in the plot are
the three stationary points. The figure also shows the three values
for $g(x)$ and their gradients $g'(x)$ for a range of $x$. Note that
one of the solutions (\ie$y = 0$) is independent of $x$.

\begin{figure}
  \begin{tabular}{cc}
    \begin{minipage}{0.31\textwidth}
      \centering
      \includegraphics[width=\textwidth]{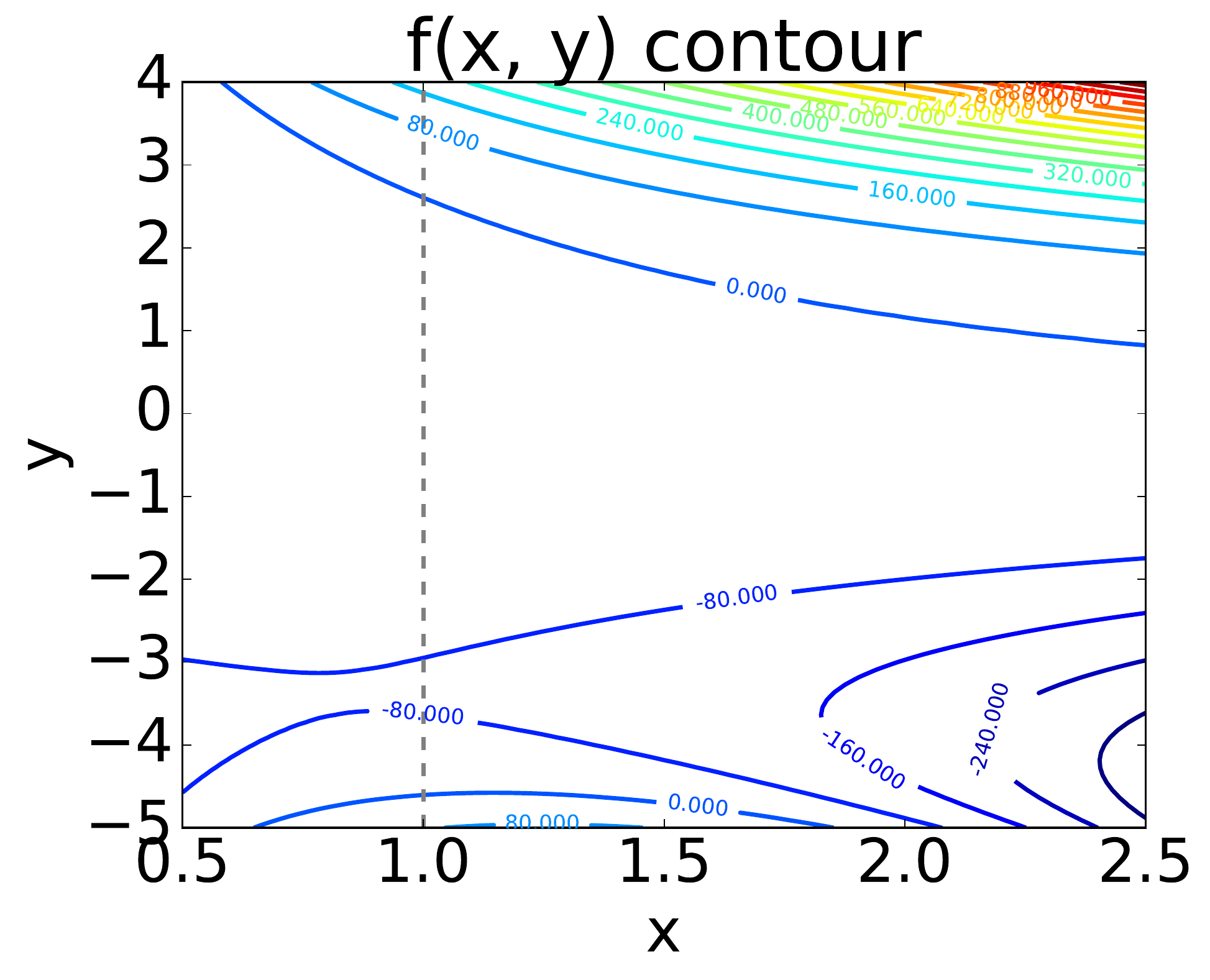} \\
      \includegraphics[width=\textwidth]{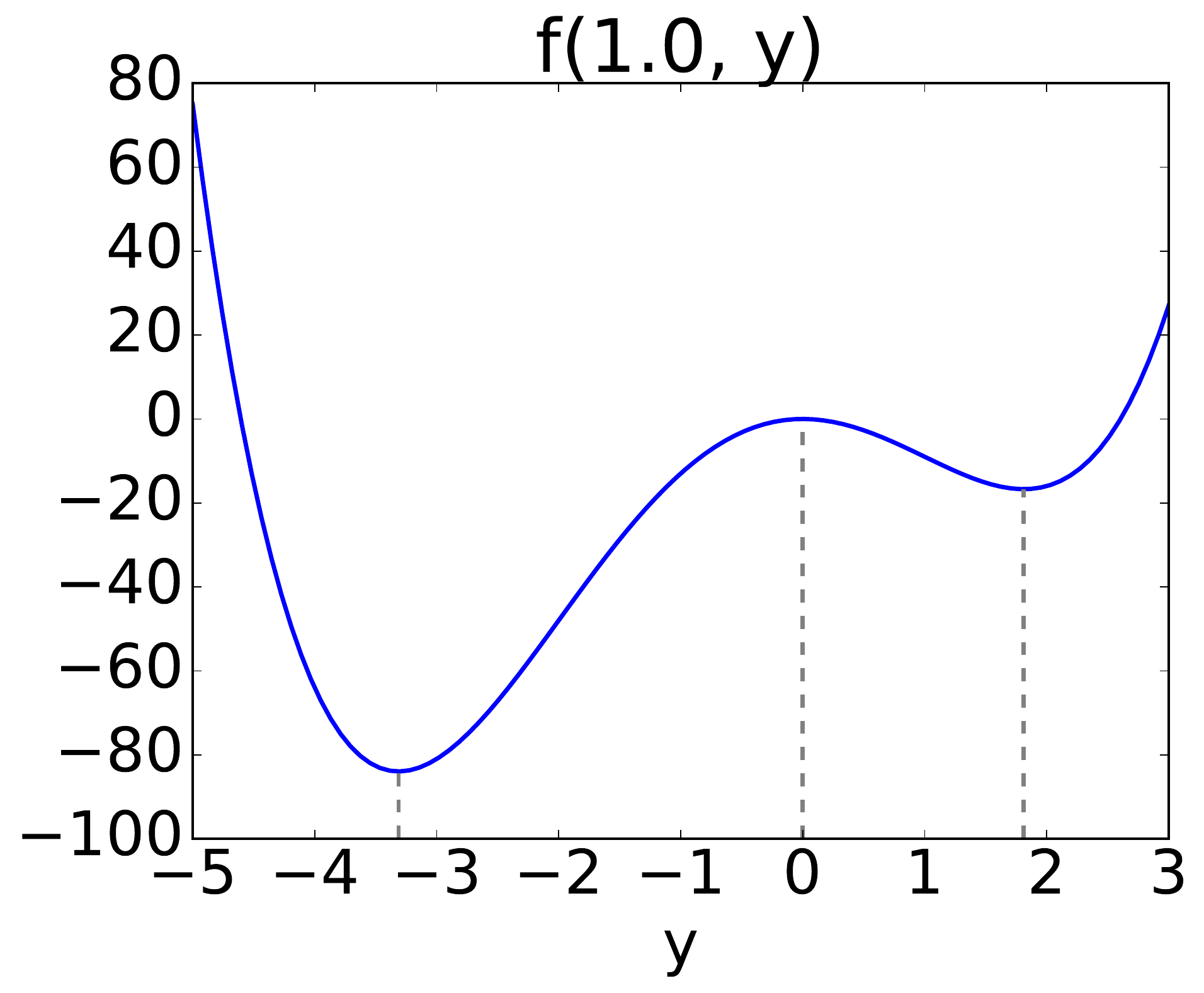}
    \end{minipage}
    &
    \begin{minipage}{0.64\textwidth}
      \centering
      \includegraphics[width=\textwidth]{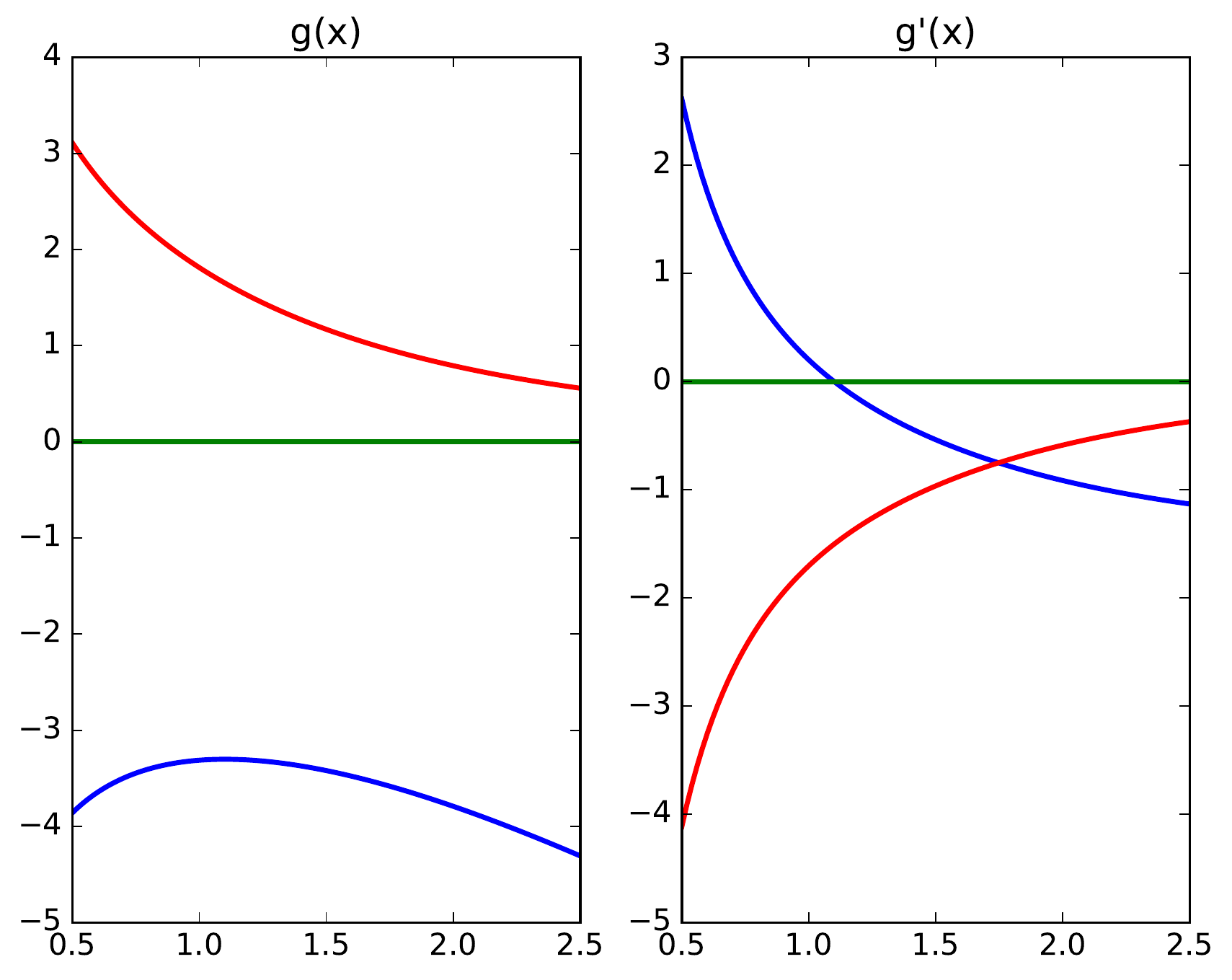}
    \end{minipage}
  \end{tabular}
  \caption{\label{fig:scalar_example} Example of a parameterized
    scalar function $f(x, y) = xy^4 + 2x^2y^3 - 12y^2$ with three
    stationary points for any fixed $x > 0$. The top-left panel shows
    a contour plot of $f$; the bottom-left panel shows the function at
    $x = 1$; and the remaining panels show the three solutions for
    $g(x) = \argmin_y f(x, y)$ and corresponding gradients $g'(x)$ at
    each stationary point.}
\end{figure}

\subsection{Example: Maximum Likelihood of Soft-max Classifier}
\label{sec:ml_example}

Now consider the more elaborate example of exploring how the maximum
likelihood feature vector of a soft-max classifier changes as a
function of the classifier's parameters. Assume $m$ classes and let
classifier be parameterized by $\Theta = \{(\ba_i,
b_i)\}_{i=1}^{m}$. Then we can define the likelihood of feature vector
$\bx$ for the $i$-th class of a soft-max distribution as
\begin{align}
  \ell_i(\bx) &= P(Y = i \mid \bX = \bx; \Theta) \\
  &= \frac{1}{Z(\bx; \Theta)} \exp\left(\ba_i^T \bx + b_i\right)
\end{align}
where $Z(\bx; \Theta) = \sum_{j=1}^{m} \exp\left(\ba_j^T \bx +
b_j\right)$ is the partition function.\footnote{Note that we use a
  different notation here to be consistent with the standard notation
  in machine learning. In particular, the role of $\bx$ is differs
  from its appearance elsewhere in this article.}

The maximum (log-)likelihood feature vector for class $i$ can be found as
\begin{align}
  \bg_i(\Theta) &= \argmax_{\bx \in \reals^n} \log \ell_i(\bx) \\
  &= \argmax_{\bx \in \reals^n} \left\{
  \ba_i^T \bx + b_i - \log \left( \sum_{j=1}^{m} \exp\left(\ba_j^T \bx + b_j\right) \right)
  \right\}
\end{align}
whose objective is concave and so has a unique global
maximum. However, the problem, in general, has no closed-form
solution. Yet we can still compute the derivative of the maximum
likelihood feature vector $\bg_i(\Theta)$ with respect to any of the
model parameters as follows.
\begin{align}
  \nabla_{\bx} \log \ell_i(\bx)
  &= \ba_i - \sum_{j=1}^{m} \ell_j(\bx) \ba_j
  \\
  \nabla^2_{\bx\bx} \log \ell_i(\bx)
  &= \sum_{j=1}^{m} \sum_{k=1}^{m} \ell_j(\bx) \ell_k(\bx) \ba_j \ba_k^T -
  \sum_{j=1}^{m} \ell_j(\bx) \ba_j\ba_j^T
  \\
  \frac{\partial}{\partial a_{jk}} \nabla_{\bx} \log \ell_i(\bx)
  &= \ind{i = j}\be_k - \ell_j(\bx) x_k \left(\ba_j - \sum_{l=1}^{m} \ell_l(\bx) \ba_l\right) - \ell_j(\bx) \be_k
  \\
  \frac{\partial}{\partial b_{j}} \nabla_{\bx} \log \ell_i(\bx)
  &= - \ell_j(\bx) \left(\ba_j - \sum_{k=1}^{m} \ell_k(\bx) \ba_k\right)
\end{align}
where $\be_k$ is the $k$-th canonical vector ($k$-th element one and
the rest zero) and $\ind{\cdot}$ is the indicator function (or iverson
bracket), which takes value 1 when its argument is true and 0
otherwise.

Letting $\bx^\star = \bg_i(\Theta)$, $H = \nabla^2_{\bx\bx} \log
\ell_i(\bx^\star)$ and $\bar{\ba} = \sum_{k=1}^{m} \ell_k(\bx^\star)
\ba_k$, we have
\begin{align}
  \frac{\partial \bg_i}{\partial a_{jk}}
  &= \begin{cases}
    H^{-1} (\ell_i(\bx^\star) - 1) \be_k,
    & i = j
    \\
    \ell_j(\bx^\star) H^{-1} \left(x^\star_k(\ba_j - \bar{\ba}) + \be_k\right),
    & i \neq j
  \end{cases}
  \\
  \frac{\partial \bg_i}{\partial b_{j}}
  &= \begin{cases}
    0, & i = j
    \\
    \ell_j(\bx^\star) H^{-1} \left(\ba_j - \bar{\ba}\right), & i \neq j    
  \end{cases}
\end{align}
where we have used the fact that $\ba_i = \bar{\ba}$ since
$\nabla_{\bx} \log \ell_i(\bx^\star) = 0$.

\begin{figure}
  \begin{tabular}{cc}
    \includegraphics[width=0.45\textwidth]{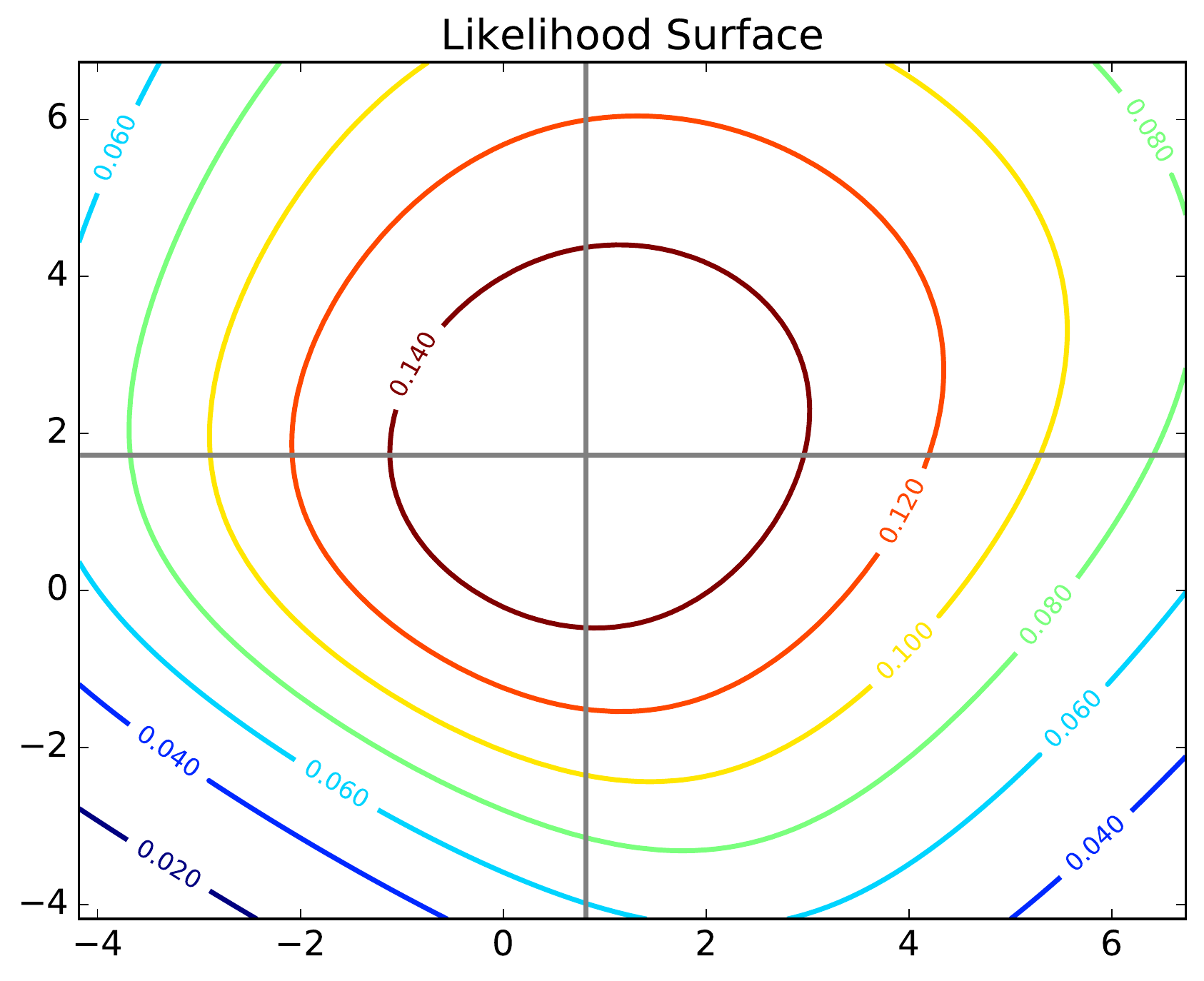} &
    \includegraphics[width=0.45\textwidth]{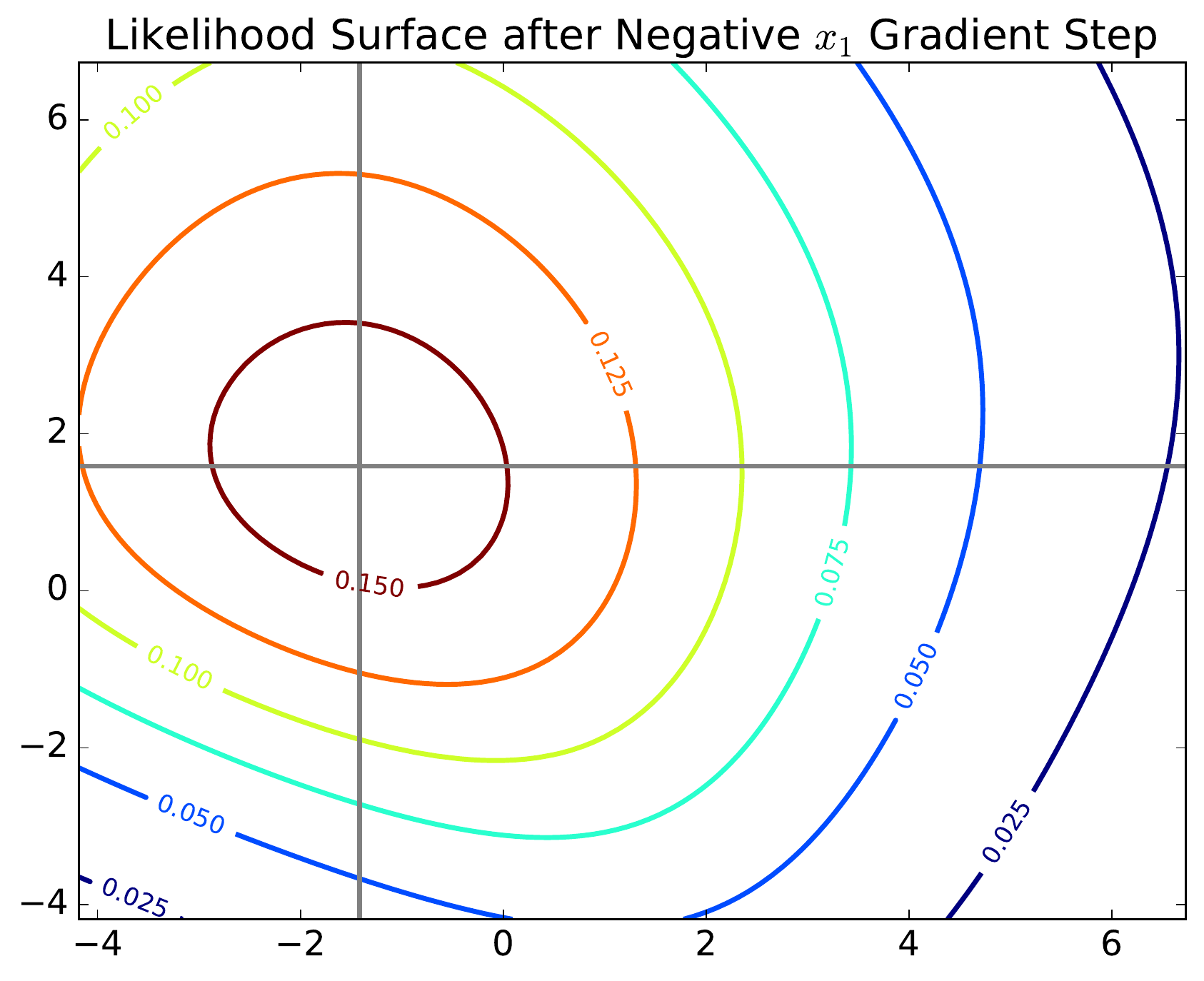}
    \\
    {\small (a) $\bx^\star = (0.817, 1.73)$} &
    {\small (b) $\bx^\star = (-1.42, 1.59)$}
  \end{tabular}
  \caption{\label{fig:ml_example} Example maximum-likelihood surfaces
    $\ell_i(\bx)$ before and after taking a small step on all
    parameters in the negative gradient direction for $x^\star_1$.}
\end{figure}

\figref{fig:ml_example} shows example maximum-likelihood surfaces for
one of the likelihood functions $\ell_i(\bx)$ from a soft-max
distribution over ten classes and two-dimensional feature vectors. The
parameters of the distribution were chosen randomly. Shown are the
initial surface and the surface after adjusting all parameters in the
negative gradient direction for the first feature dimension, \ie
$\theta \leftarrow \theta - \eta \be_1^T \frac{\partial
  \bg_i}{\partial \theta}$ where we have used $\theta$ to stand for
any of the parameters in $\{a_{ij}, b_i \mid i = 1, \ldots, 10; j = 1,
2\}$.

This example will be developed further below by adding equality and
inequality constraints and then applying it in the context of bi-level
optimization.

\subsection{Invariance Under Monotonic Transformations}
\label{sec:monotonic}

This section explores the idea that the optimal point of a function is
invariant under certain transformations, \eg exponentiation. Let $g(x)
= \argmin_{y} f(x, y)$. Now let $\tilde{f}(x, y) = e^{f(x, y)}$ and
$\tilde{g}(x) = \argmin_{y} \tilde{f}(x, y)$. Clearly, $\tilde{g}(x) =
g(x)$ (since the exponential function is smooth and monotonically
increasing).

Computing the gradients we see that
\begin{align}
\tilde{f}_Y(x, y) &= e^{f(x, y)} f_Y(x, y) \\
\tilde{f}_{XY}(x, y) &= e^{f(x, y)} f_{XY}(x, y) + e^{f(x, y)} f_X(x, y) f_Y(x, y) \\
\tilde{f}_{YY}(x, y) &= e^{f(x, y)} f_{YY}(x, y) + e^{f(x, y)} f_Y^2(x, y)
\end{align}

So applying Lemma~\ref{lem:scalar_argmin}, we have
\begin{align}
\tilde{g}'(x) &= - \frac{\tilde{f}_{XY}(x, g(x))}{\tilde{f}_{YY}(x, g(x))} \\
&= -\frac{e^{f(x, g(x))} f_{XY}(x, g(x)) + e^{f(x, g(x))} f_X(x, g(x)) f_Y(x, g(x))}
{e^{f(x, g(x))} f_{YY}(x, g(x)) + e^{f(x, g(x))} f_Y^2(x, g(x))} \\
&= -\frac{f_{XY}(x, g(x)) + f_X(x, g(x)) f_Y(x, g(x))}
{f_{YY}(x, g(x)) + f_Y^2(x, g(x))} & \text{(cancelling $e^{f(x, g(x))}$)} \\
&= -\frac{f_{XY}(x, g(x))}{f_{YY}(x, g(x))} & \text{(since $f_Y(x, g(x)) = 0$)} \\
&= g'(x)
\end{align}

Similarly, now assume $f(x, y) > 0$ for all $x$ and $y$, and let
$\tilde{f}(x, y) = \log f(x, y)$ and $\tilde{g}(x) = \argmin_{y}
\tilde{f}(x, y)$. Again we have $\tilde{g}(x) = g(x)$ (since the
logarithmic function is smooth and monotonically increasing on the
positive reals).

Verifying Lemma~\ref{lem:scalar_argmin}, we have
\begin{align}
\tilde{f}_Y(x, y) &= \frac{1}{f(x, y)} f_Y(x, y) \\
\tilde{f}_{XY}(x, y) &= \frac{f(x, y) f_{XY}(x, y) - f_Y(x, y)f_X(x, y)}{f^2(x, y)} \\
\tilde{f}_{YY}(x, y) &= \frac{f(x, y) f_{YY}(x, y) - f^2_Y(x, y)}{f^2(x, y)}
\end{align}
and once again we can set $f_Y(x, g(x))$ to zero and cancel terms to get $\tilde{g}'(x) = g'(x)$.

These examples motivate the following lemma, which formalizes the fact
that composing a function with a monotonically increasing or
monotonically decreasing function does not change its stationary
points.

\begin{lemma}
  Let $f: \reals \times \reals \rightarrow \reals$ be a continuous
  function with first and second derivatives. Let $h: \reals
  \rightarrow \reals$ be a smooth monotonically increasing or
  monotoncially decreasing function and let $g(x) = \argmin_{y} h(f(x,
  y))$. Then the derivative of $g$ with respect to $x$ is
  \begin{align*}
    \frac{dg(x)}{dx} &= - \frac{f_{XY}(x, g(x))}{f_{YY}(x, g(x))}
  \end{align*}
  where $f_{XY} \doteq \frac{\partial^2f}{\partial x \partial y}$
  and $f_{YY} \doteq \frac{\partial^2f}{\partial y^2}$.
\label{lem:scalar_argmin_composed}  
\end{lemma}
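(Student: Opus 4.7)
The plan is to reduce to Lemma~\ref{lem:scalar_argmin} by treating $\tilde f(x,y) \doteq h(f(x,y))$ as a new scalar objective and then showing that the $h$-dependent terms cancel at any stationary point of $\tilde f$ in $y$. Concretely, I would first note that because $h$ is smooth and strictly monotonic, $h'$ never vanishes, so the first-order optimality condition $\tilde f_Y(x, g(x)) = 0$ is equivalent to $f_Y(x, g(x)) = 0$. In particular, $g(x)$ is a stationary point of $f(x, \cdot)$, which means Lemma~\ref{lem:scalar_argmin} (whose proof only relies on stationarity, not on being a minimum) can be applied to $\tilde f$ and we just need to show its right-hand side reduces to the one stated.

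Next I would compute the needed partial derivatives of $\tilde f$ via the chain rule, obtaining
\begin{align*}
\tilde f_Y &= h'(f)\, f_Y,\\
\tilde f_{XY} &= h''(f)\, f_X f_Y + h'(f)\, f_{XY},\\
\tilde f_{YY} &= h''(f)\, f_Y^2 + h'(f)\, f_{YY}.
\end{align*}
Substituting $y = g(x)$ and using $f_Y(x, g(x)) = 0$ from the previous step, the $h''$ contributions drop out, leaving $\tilde f_{XY}(x, g(x)) = h'(f)\, f_{XY}(x, g(x))$ and $\tilde f_{YY}(x, g(x)) = h'(f)\, f_{YY}(x, g(x))$. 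Plugging into the formula from Lemma~\ref{lem:scalar_argmin}, the common non-zero factor $h'(f(x, g(x)))$ cancels, yielding the claimed expression.

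The only subtlety worth being careful about is justifying that $f_Y(x, g(x)) = 0$ in the first place; this is exactly where the strict monotonicity of $h$ is used. For the monotonically decreasing case, the argument is identical: $\argmin_y h(f(x,y))$ coincides with $\argmax_y f(x,y)$, which by Lemma~\ref{lem:vector_argmax} satisfies the same derivative formula, and the cancellation of $h'(f)$ proceeds exactly as above (with $h'(f) < 0$, but still non-zero). I do not expect any genuine obstacle here — the proof is essentially the scalar exponential and logarithm calculations done immediately before the lemma, written once in general form with $h$ in place of $\exp$ or $\log$.
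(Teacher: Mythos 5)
Your proposal is correct and rests on the same key observation as the paper's (one-sentence) proof: since $h'$ never vanishes, the stationarity condition $\tfrac{\partial}{\partial y}h(f(x,y))=0$ is equivalent to $f_Y(x,y)=0$, after which Lemma~\ref{lem:scalar_argmin} delivers the formula. The paper applies that lemma directly to $f$ once stationarity is established, whereas you apply it to $h\circ f$ and cancel the $h'$ and $h''$ terms --- a slightly longer but equivalent route that just writes out in general form the $\exp$/$\log$ calculations preceding the lemma.
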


\begin{proof}
  Follows from \lemref{lem:scalar_argmin} observing that
  $\frac{\partial h(f(x, y))}{\partial y} = h'(f(x, y)) \frac{\partial
    f(x, y)}{\partial y}$ by the chain rule and that $h'(f(x, y))$ is
  always non-zero for monotonically increasing or decreasing $h$.
\end{proof}

\section{Constrained Optimization Problems}
\label{sec:constrained}

In this section we extend the results of the $\argmin$ and $\argmax$
derivatives to problems with linear equality and arbitrary inequality
constraints.

\subsection{Equality Constraints}
\label{sec:eq_constraints}

Let us introduce linear equality constraints $A \by = \bb$ into the
vector version of our minimization problem. We now have $\bg(x) =
\argmin_{\by : A \by = \bb} f(x, \by)$ and wish to find $\bg'(x)$.

\begin{lemma}
Let $f: \reals \times \reals^n \rightarrow \reals$ be a continuous
function with first and second derivatives. Let $A \in \reals^{m
  \times n}$ and $\bb \in \reals^m$. Let $\by_0 \in \reals^n$ be any
vector satisfying $A \by_0 = \bb$ and let the columns of $F$ span the
null-space of $A$. Let $\bz^\star(x) \in \argmin_{\bz} f(x, \by_0 + F
\bz)$ so that $\bg(x) = \by_0 + F \bz^\star(x)$. Then
\begin{align*}
  \bg'(x) &= - F \left(F^T f_{YY}(x, \bg(x)) F\right)^{-1} F^T f_{XY}(x, \bg(x))
\end{align*}
where $f_{YY} \doteq \nabla^{2}_{\by\by} f(x, \by) \in \reals^{n \times n}$ and 
$f_{XY} \doteq \frac{\partial}{\partial x} \nabla_{\by} f(x, \by) \in \reals^{n}$.
\end{lemma}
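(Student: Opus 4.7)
The plan is to reduce the equality-constrained $\argmin$ to an unconstrained one via the null-space parametrization and then apply Lemma~\ref{lem:vector_argmin} directly. Since $A \by_0 = \bb$ and the columns of $F$ form a basis for the null-space of $A$, the feasible set $\{\by : A\by = \bb\}$ is exactly the affine subspace $\{\by_0 + F\bz : \bz \in \reals^{n-m}\}$. Define the reduced objective $\tilde f(x, \bz) \doteq f(x, \by_0 + F\bz)$, so that the constrained problem becomes the unconstrained $\bz^\star(x) = \argmin_\bz \tilde f(x, \bz)$.

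Next I would compute the derivatives of $\tilde f$ by the chain rule. Writing $\by = \by_0 + F\bz$, one gets $\nabla_\bz \tilde f(x, \bz) = F^T \nabla_\by f(x, \by)$, and differentiating once more yields $\tilde f_{ZZ}(x, \bz) = F^T f_{YY}(x, \by) F$ and $\tilde f_{XZ}(x, \bz) = F^T f_{XY}(x, \by)$. Substituting these into Lemma~\ref{lem:vector_argmin} applied to $\tilde f$ gives $\bz^{\star\prime}(x) = -(F^T f_{YY}(x, \bg(x)) F)^{-1} F^T f_{XY}(x, \bg(x))$. Finally, since $\bg(x) = \by_0 + F \bz^\star(x)$ and $\by_0$ does not depend on $x$, differentiating both sides gives $\bg'(x) = F \bz^{\star\prime}(x)$, which when combined with the previous expression yields the stated formula.

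The only nontrivial step is justifying invertibility of the reduced Hessian $F^T f_{YY}(x, \bg(x)) F$. This plays exactly the same role here as invertibility of $f_{YY}$ did in Lemma~\ref{lem:vector_argmin}: it is the second-order sufficient condition for $\bz^\star(x)$ to be a non-degenerate stationary point of $\tilde f(x, \cdot)$, and is guaranteed, for instance, whenever $f(x, \cdot)$ is strictly convex along the feasible directions. I would also note, as a sanity check, that the formula is independent of which basis $F$ is chosen for $\ker A$: under a change of basis $F \mapsto FM$ with $M$ invertible, the combination $F (F^T f_{YY} F)^{-1} F^T$ is unchanged, confirming that $\bg'(x)$ is an intrinsic quantity of the constraint set rather than an artefact of the parametrization.
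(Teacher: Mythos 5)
Your proposal is correct and follows essentially the same route as the paper: the paper likewise uses the null-space reparametrization $\by = \by_0 + F\bz$, differentiates the stationarity condition $F^T f_Y(x,\bg(x)) = 0$ to obtain $\bz'(x) = -(F^T f_{YY} F)^{-1} F^T f_{XY}$, and multiplies by $F$. Your explicit appeal to Lemma~\ref{lem:vector_argmin} on the reduced objective, and your remarks on invertibility of the reduced Hessian and basis-independence, are welcome refinements but do not change the argument.
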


\begin{proof}
  \begin{align}
    \left. \nabla_{\bz} f(x, \by_0 + F \bz) \right|_{\bz = \bz^\star(x)} &= 0 \\
    F^T f_Y(x, \bg(x)) &= 0 \\
    F^T f_{XY}(x, \bg(x)) + F^T f_{YY}(x, \bg(x)) F \bz'(x) &= 0 \\
    \therefore \, \bz'(x) &= - (F^T f_{YY}(x, \bg(x)) F)^{-1} F^T f_{XY}(x, \bg(x)) \\
    \therefore \, \bg'(x) &= - F (F^T f_{YY}(x, \bg(x)) F)^{-1} F^T f_{XY}(x, \bg(x))
  \end{align}
\end{proof}

Alternatively, we can construct the $\bg'(x)$ directly as the
following lemma shows.

\begin{lemma}
Let $f: \reals \times \reals^n \rightarrow \reals$ be a continuous
function with first and second derivatives. Let $A \in \reals^{m
  \times n}$ and $\bb \in \reals^m$ with $\mathop{\textrm{rank}}(A) =
m$. Let $\bg(x) = \argmin_{\by : A\by = \bb} f(x, \by)$. Let $H =
f_{YY}(x, \bg(x))$. Then
\begin{align*}
  \bg'(x) &= \left(H^{-1} A^T \left(A H^{-1} A^T\right)^{-1} A H^{-1} - H^{-1} \right)
  f_{XY}(x, \bg(x))
\end{align*}
where $f_{YY} \doteq \nabla^{2}_{\by\by} f(x, \by) \in \reals^{n \times n}$ and 
$f_{XY} \doteq \frac{\partial}{\partial x} \nabla_{\by} f(x, \by) \in \reals^{n}$.
\label{lem:constrained_argmin}
\end{lemma}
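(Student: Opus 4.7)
The plan is to avoid the null-space parameterization and instead work directly with the Karush--Kuhn--Tucker conditions for the equality-constrained problem. Introduce the Lagrangian $L(x, \by, \blam) = f(x, \by) + \blam^T(A\by - \bb)$, so that $\bg(x)$ together with some multiplier vector $\blam^\star(x) \in \reals^m$ must satisfy the stationarity and primal feasibility conditions
\begin{align}
f_Y(x, \bg(x)) + A^T \blam^\star(x) &= 0, \\
A \bg(x) - \bb &= 0.
\end{align}
Both identities hold for all $x$ in a neighbourhood of the point of interest, so I can differentiate each with respect to $x$ just as in Lemma~\ref{lem:vector_argmin}.

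Differentiating and using the chain rule yields the coupled linear system
\begin{align}
H\, \bg'(x) + A^T \blam^{\star\prime}(x) &= -f_{XY}(x, \bg(x)), \\
A\, \bg'(x) &= 0,
\end{align}
which I would write as the saddle-point block matrix equation
\begin{align*}
\begin{bmatrix} H & A^T \\ A & 0 \end{bmatrix}
\begin{bmatrix} \bg'(x) \\ \blam^{\star\prime}(x) \end{bmatrix}
= \begin{bmatrix} -f_{XY}(x, \bg(x)) \\ 0 \end{bmatrix}.
\end{align*}
The next step is a standard block elimination: solve the first row for $\bg'(x) = -H^{-1}(f_{XY} + A^T \blam^{\star\prime}(x))$, substitute into $A\bg'(x) = 0$ to get $A H^{-1} A^T \blam^{\star\prime}(x) = -A H^{-1} f_{XY}$, invert the Schur complement to recover $\blam^{\star\prime}(x)$, and then back-substitute to get the claimed expression for $\bg'(x)$.

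The main obstacle, and really the only subtle point, is justifying the two inversions. Invertibility of $H$ must be assumed (as implicit in the statement, and as is needed even in the unconstrained Lemma~\ref{lem:vector_argmin}); more delicately, invertibility of the Schur complement $A H^{-1} A^T$ requires that $A$ have rank $m$, which is precisely the rank hypothesis stated in the lemma. Once both inverses are in hand the algebra is a routine rearrangement that collects $H^{-1}$ and $H^{-1}A^T(AH^{-1}A^T)^{-1}AH^{-1}$ as a projector-like combination acting on $f_{XY}(x, \bg(x))$, matching the stated formula. As a sanity check I would verify consistency with the previous lemma by observing that $I - H^{-1}A^T(AH^{-1}A^T)^{-1}A$ is (up to a similarity) the oblique projector onto the null-space of $A$, so that the two expressions agree whenever $H$ is positive definite, which is the regime where both lemmas apply.
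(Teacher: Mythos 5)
Your proposal is correct and takes essentially the same route as the paper's own proof: the paper likewise forms the Lagrangian, differentiates the stationarity and primal feasibility conditions to obtain the identical saddle-point block system, and solves it by the same block elimination through the Schur complement $A H^{-1} A^T$. Your added remarks on the invertibility hypotheses and consistency with the unconstrained lemma are sound but not part of the paper's argument.
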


\begin{proof}
  Consider the Lagrangian $\Ell$ for the constrained optimization problem,
  \begin{align}
    \begin{array}{ll}
      \displaystyle \mathop{\text{minimize}}_{\by \in \reals^n} & f(x, \by) \\
      \text{subject to} & A \by = \bb
    \end{array}
  \end{align}
  We have
  \begin{align}
    \Ell(x, \by, \blambda) &= f(x, \by) + \blambda^T (A \by - \bb)
  \end{align}
  Assume $\tilde{\bg}(x) = (\by^\star(x), \blambda^\star(x))$ is a
  optimal primal-dual pair. Then
  \begin{align}
    \left[ \begin{matrix}
        \nabla_{\by}\Ell(x, \by^\star, \blambda^\star) \\
        \nabla_{\blambda}\Ell(x, \by^\star, \blambda^\star)
      \end{matrix} \right] =
    \left[ \begin{matrix}
        f_Y(x, \by^\star) + A^T \blambda^\star \\
        A \by^\star - \bb
      \end{matrix} \right] &= 0
    \\
    \frac{\text{d}}{\text{d}x}
    \left[ \begin{matrix}
        f_Y(x, \by^\star) + A^T \blambda^\star \\
        A \by^\star - \bb
      \end{matrix} \right] &= 0
    \\
    \left[ \begin{matrix}
        f_{XY}(x, \by^\star) \\
        0
      \end{matrix} \right]
    +
    \left[ \begin{matrix}
        f_{YY}(x, \by^\star) & A^T \\
        A & 0
      \end{matrix} \right]
    \left[ \begin{matrix}
        \tilde{\bg}_Y(x) \\ \tilde{\bg}_{\Lambda}(x)
      \end{matrix} \right]
    &= 0
    \\
    \therefore \;
    \left[ \begin{matrix}
        f_{YY}(x, \by^\star) & A^T \\
        A & 0
      \end{matrix} \right]
    \left[ \begin{matrix}
        \tilde{\bg}_Y(x) \\ \tilde{\bg}_{\Lambda}(x)
      \end{matrix} \right]
    &=  \left[ \begin{matrix}
        - f_{XY}(x, \by^\star) \\
        0
      \end{matrix} \right]
  \end{align}

  From the first row of the block matrix equation above, we have
  \begin{align}
    \tilde{\bg}_Y(x) &= H^{-1} \left( - f_{XY}(x, \by^\star) - A^T \tilde{\bg}_\Lambda(x) \right)
  \end{align}
  where $H = f_{YY}(x, \by^\star)$.

  Substituting into the second row gives
  \begin{align}
    A H^{-1} \left(f_{XY}(x, \by^\star) + A^T \tilde{\bg}_\Lambda(x) \right) &= 0 \\
    \therefore \; \tilde{\bg}_\Lambda(x) &= - \left( A H^{-1} A^T \right)^{-1} A H^{-1} f_{XY}(x, \by^\star)
  \end{align}
  which when substituted back into the first row gives the desired
  result with $\bg(x) = \by^\star$ and $\bg'(x) = \tilde{\bg}_{Y}(x)$.
\end{proof}

Note that for $A = 0$ (and $b = 0$) this reduces to the result from
Lemma~\ref{lem:vector_argmin}. Furthermore, $\bg'(x)$ is in the
null-space of $A$, which we require if the linear equality constraint
is to remain satisfied.


\subsection{Inequality Constraints}
\label{sec:ineq_constraints}

Consider a family of optimization problems, indexed by $x$, with
inequality constraints. In standard form we have
\begin{align}
  \begin{array}{lll}
    \text{minimize}_{\by \in \reals} & f_0(x, \by) \\
    \text{subject to} & f_i(x, \by) \leq 0 & i = 1, \ldots, m
  \end{array}
\end{align}
where $f_0(x, \by)$ is the objective function and $f_i(x, \by)$ are
the inequality constraint functions.

Let $\bg(x) \in \reals^n$ be an optimal solution. We wish to find
$\bg'(x)$, which we approximate using ideas from interior-point
methods~\cite{Boyd:2004}. Introducing the log-barrier function
$\phi(x, \by) = \sum_{i=1}^{m} \log(-f_i(x, \by))$, we can approximate
the above problem as
\begin{align}
  \begin{array}{ll}
    \text{minimize}_{\by} & t f_0(x, \by) - \sum_{i=1}^{m} \log(-f_i(x, \by))
  \end{array}
\end{align}
where $t > 0$ is a scaling factor that controls the approximation (or
duality gap when the problem is convex). We now have an unconstrained
problem and can invoke Lemma~\ref{lem:vector_argmin}.

For completeness, recall that the gradient and Hessian of the
log-barrier function, $\phi(\bz) = \sum_{i=1}^{m} \log(-f_i(\bz))$,
are given by
\begin{align}
  \nabla \phi(\bz) & = - \sum_{i=1}^{m} \frac{1}{f_i(\bz)} \nabla f_i(\bz), \\
  \nabla^2 \phi(\bz) & =  \sum_{i=1}^{m} \frac{1}{f_i(\bz)^2} \nabla f_i(\bz) \nabla f_i(\bz)^T
  - \sum_{i=1}^{m} \frac{1}{f_i(\bz)} \nabla^2 f_i(\bz)
\end{align}
Thus the gradient of an inequality constrained $\argmin$ function can
be approximated as
\begin{align}
  \bg'(x) &\approx - \Big(t f_{YY}(x, \bg(x)) - \phi_{YY}(x, \bg(x))\Big)^{-1} 
  \Big(t f_{XY}(x, \bg(x)) - \phi_{XY}(x, \bg(x)) \Big)
\end{align}
In many cases the constraint functions $f_i$ will not depend on $x$
and the above expression can be simplified by setting $\phi_{XY}(x,
\by)$ to zero.

\subsection{Example: Positivity Constraints}

We consider an inequality constrained version of our scalar mean
example from above,
\begin{align}
  \begin{array}{lll}
    g(x) =& \argmin_{y \in \reals} & \sum_{i=1}^{m} \left(h_i(x) - y\right)^2 \\
    & \text{subject to} &  y \geq 0
  \end{array}
\end{align}
where we have added a positivity constraint on $y$. This problem has
closed-form solution
\begin{align}
  g(x) &= \max \left\{0,\, \frac{1}{m} \sum_{i=1}^{m} h_i(x)\right\}
\end{align}

Following \secref{sec:ineq_constraints} we can construct the
approximation
\begin{align}
  g_t(x) &= \argmin_{y \in \reals} t f(x, y) - \log(y)
\end{align}
where $f(x, y) = \sum_{i=1}^{m} \left(h_i(x) - y\right)^2$. Applying
\lemref{lem:scalar_argmin} gives
\begin{align}
  \frac{\partial}{\partial y} \left(t f(x, y) - \phi(y)\right)
  &= -2 t \sum_{i=1}^{m} \left(h_i(x) - y\right) - \frac{1}{y}
  \\
  \frac{\partial^2}{\partial x \partial y} \left(t f(x, y) - \phi(y)\right)
  &= -2t \sum_{i=1}^{m} h'_i(x)
  \\
  \frac{\partial^2}{\partial y^2} \left(t f(x, y) - \phi(y)\right)
  &= 2tm + \frac{1}{y^2}
  \\
  \therefore \; g'_t(x) &= \frac{2t \sum_{i=1}^{m} h'_i(x)}{2tm + \frac{1}{g(x)^2}}
  \label{eqn:pos_example_grad}
\end{align}

Note that here we can solve the quadratic equation induced by
$\frac{\partial}{\partial y} \left(t f(x, y) - \phi(y)\right) = 0$ to
obtain a closed-form solution of $g_t(x)$ and hence $g'_t(x)$. We
leave this as an exercise.

Observe from \eqnref{eqn:pos_example_grad} that as $t \rightarrow \infty$,
\begin{align}
  g'_t(x) &= \begin{cases}
    \frac{1}{m} \sum_{i=1}^{m} h'_i(x) & \text{if $g(x) > 0$} \\
    0 & \text{if $g(x) = 0$}
    \end{cases}
\end{align}

We demonstrate this example on a special case with $m = 1$ and $h_1(x)
= x$. The true and approximate function and their gradients are
plotted in \figref{fig:pos_example}.

\begin{figure}
  \begin{tabular}{cc}
    \includegraphics[width=0.45\textwidth]{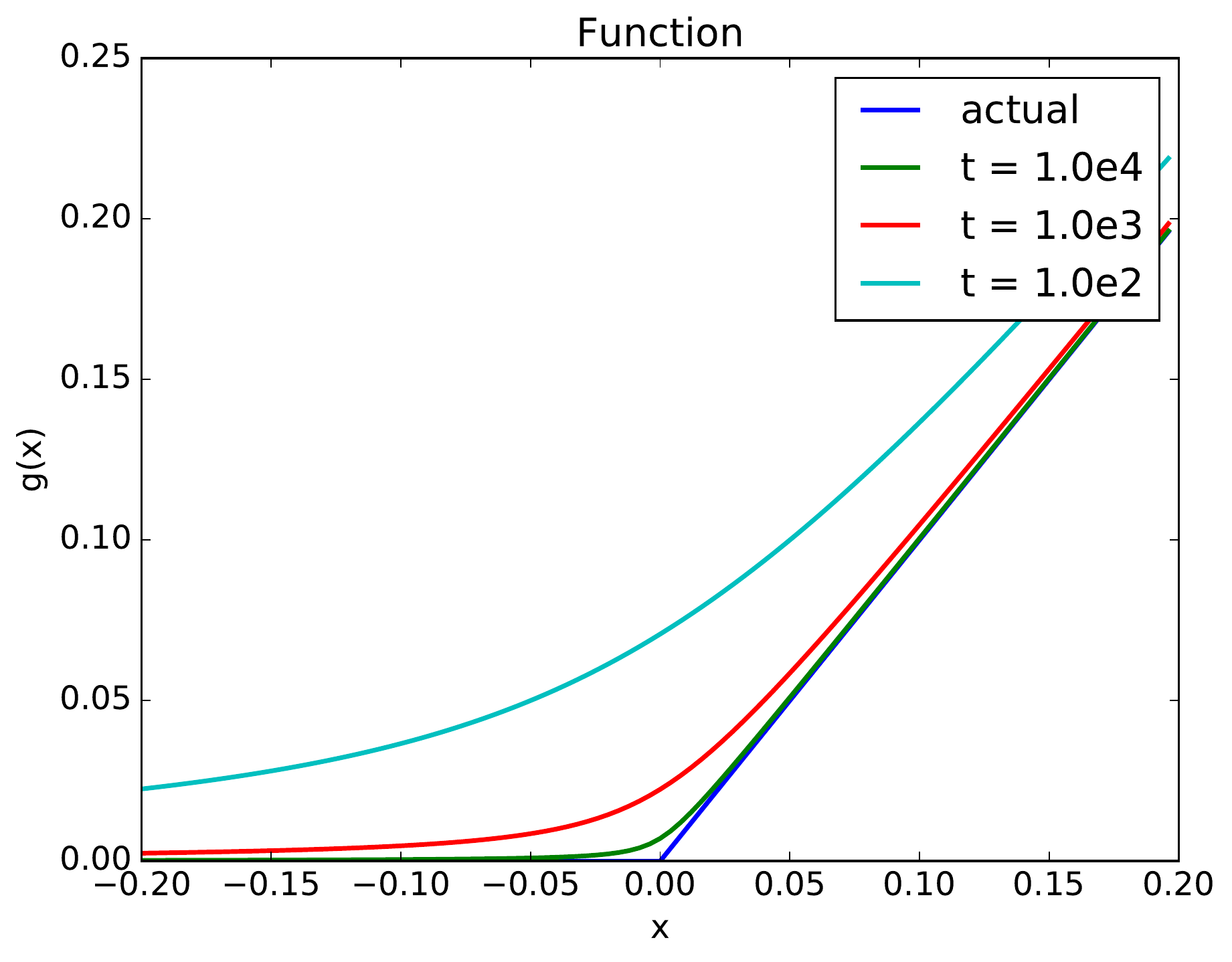} &
    \includegraphics[width=0.45\textwidth]{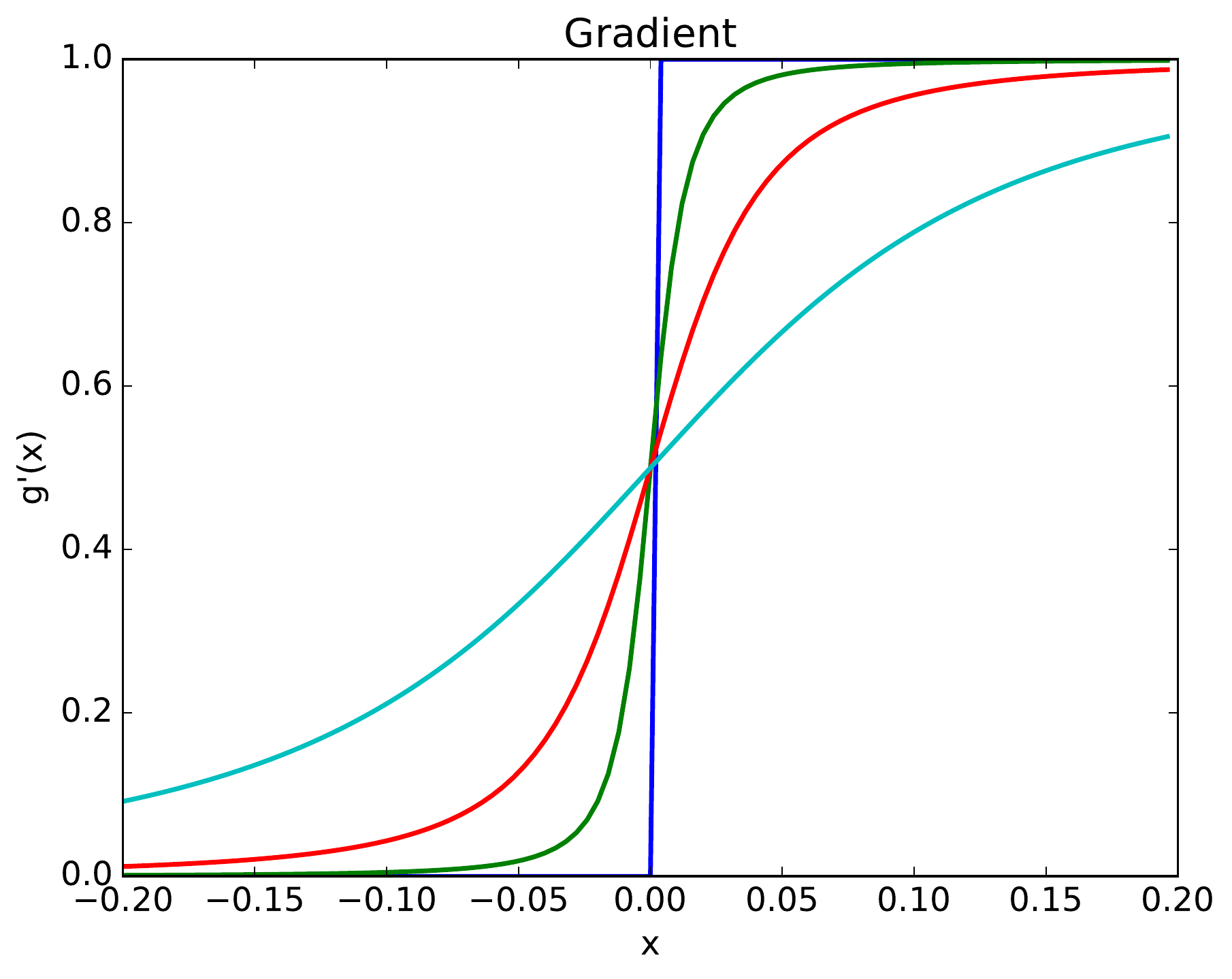}
  \end{tabular}
  \caption{\label{fig:pos_example} Example graph of both function
    value and gradient for $g(x) = \argmin_{y \geq 0} (x - y)^2$ and
    approximations $g_t(x) = \argmin_{y} t (x - y)^2 - \log(y)$ for
    different values of $t$. As $t \rightarrow \infty$ the
    approximation converges to the actual function and gradient.}
\end{figure}

\subsection{Example: Maximum Likelihood of Constrained Soft-max Classifier}
\label{sec:constrained_ml_example}

Let us now continue our soft-max example from \secref{sec:ml_example}
by adding constraints on the solution. We begin by adding the linear
equality constraint $\ones^T \bx = 1$ to give
\begin{align}
  \begin{array}{lll}
    \bg_i(\Theta)
    =& \argmax_{\bx \in \reals^n} & \left\{
    \ba_i^T \bx + b_i - \log \left( \sum_{j=1}^{m} \exp\left(\ba_j^T \bx + b_j\right) \right)
    \right\}
    \\
    & \text{subject to} & \ones^T \bx = 1
  \end{array}
\end{align}

Following \lemref{lem:constrained_argmin} and letting $H^\dagger =
\left(H^{-1} - \frac{1}{\ones^T H^{-1} \ones} H^{-1} \ones \ones^T
H^{-1}\right)$ we have the following gradients with respect to each
parameter,
\begin{align}
  \frac{\partial \bg_i}{\partial a_{jk}}
  &= \begin{cases}
    H^{\dagger} (\ell_i(\bx^\star) - 1) \be_k,
    & i = j
    \\
    \ell_j(\bx^\star) H^{\dagger} \left(x^\star_k(\ba_j - \bar{\ba}) + \be_k\right),
    & i \neq j
  \end{cases}
  \\
  \frac{\partial \bg_i}{\partial b_{j}}
  &= \begin{cases}
    0, & i = j
    \\
    \ell_j(\bx^\star) H^{\dagger} \left(\ba_j - \bar{\ba}\right), & i \neq j    
  \end{cases}
\end{align}
where $H$ and $\bar{\ba}$ are as defined in \secref{sec:ml_example}.

\figref{fig:eq_ml_example} shows the constrained solutions before and
after taking a gradient step for the same maximum-likelihood surface
as described in \secref{sec:ml_example}. Notice that the solution lies
along the line $x_1 + x_2 = 1$. Moreover, the sum of gradients for
$x_1$ and $x_2$ are zero, \ie the gradient is in the null-space of
$\ones^T$. To show this it is sufficient to prove that $H^\dagger
\ones = 0$, which is straightforward.

Next we consider adding the inequality constraint $\|\bx\|^2 \leq 1$
to the soft-max maximum-likelihood problem. That is, we constrain the
solution to lie within a unit ball centered at the origin. The new
function is given by
\begin{align}
  \bg_i(\Theta)
  =& \argmax_{\bx \in \reals^n} \left\{
  \ba_i^T \bx + b_i - \log \left( \sum_{j=1}^{m} \exp\left(\ba_j^T \bx + b_j\right) \right)
  \right\}
  \\
  & \text{subject to $\|\bx\|^2 \leq 1$}
\end{align}

\begin{figure}
  \begin{tabular}{cc}
    \includegraphics[width=0.45\textwidth]{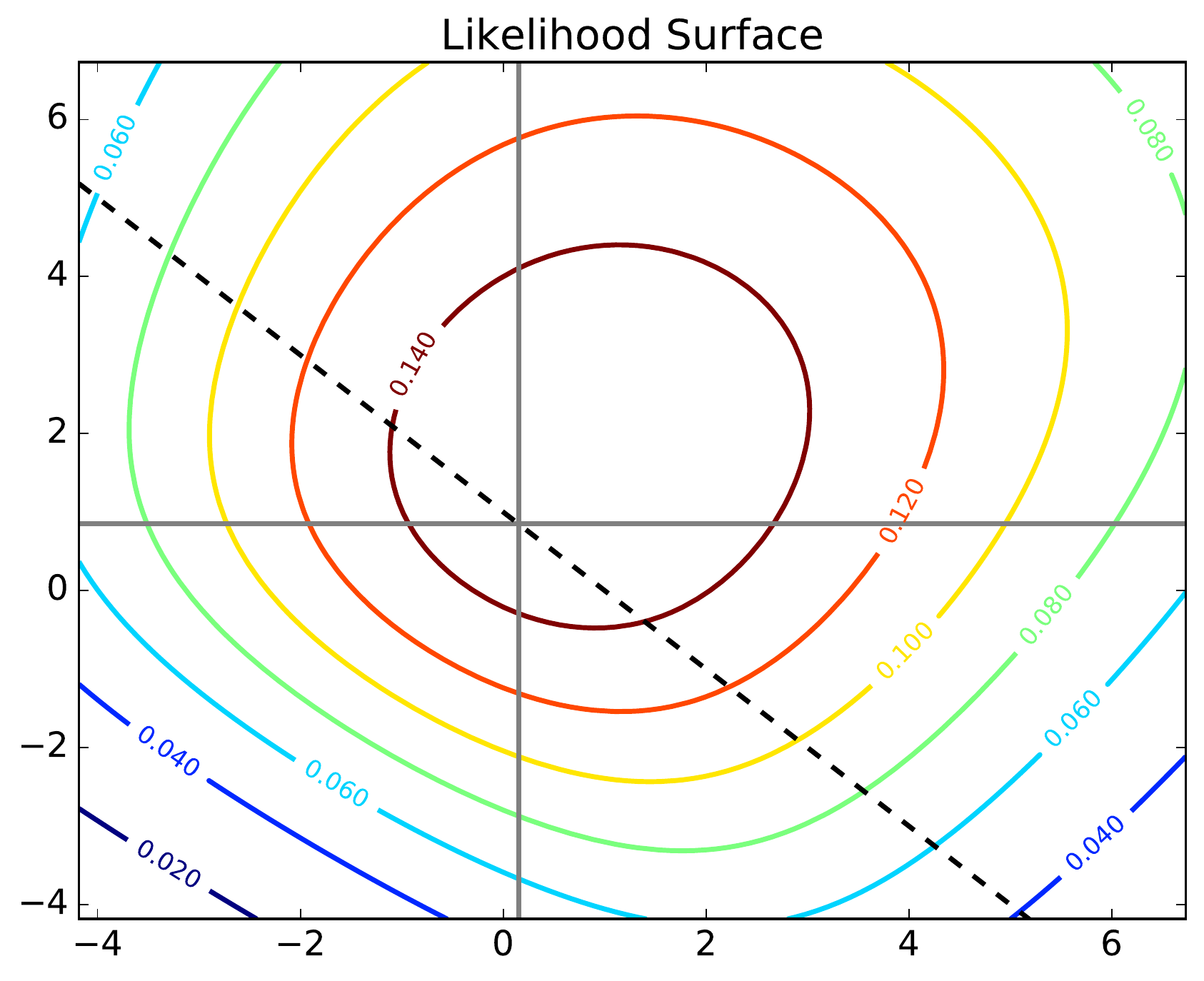} &
    \includegraphics[width=0.45\textwidth]{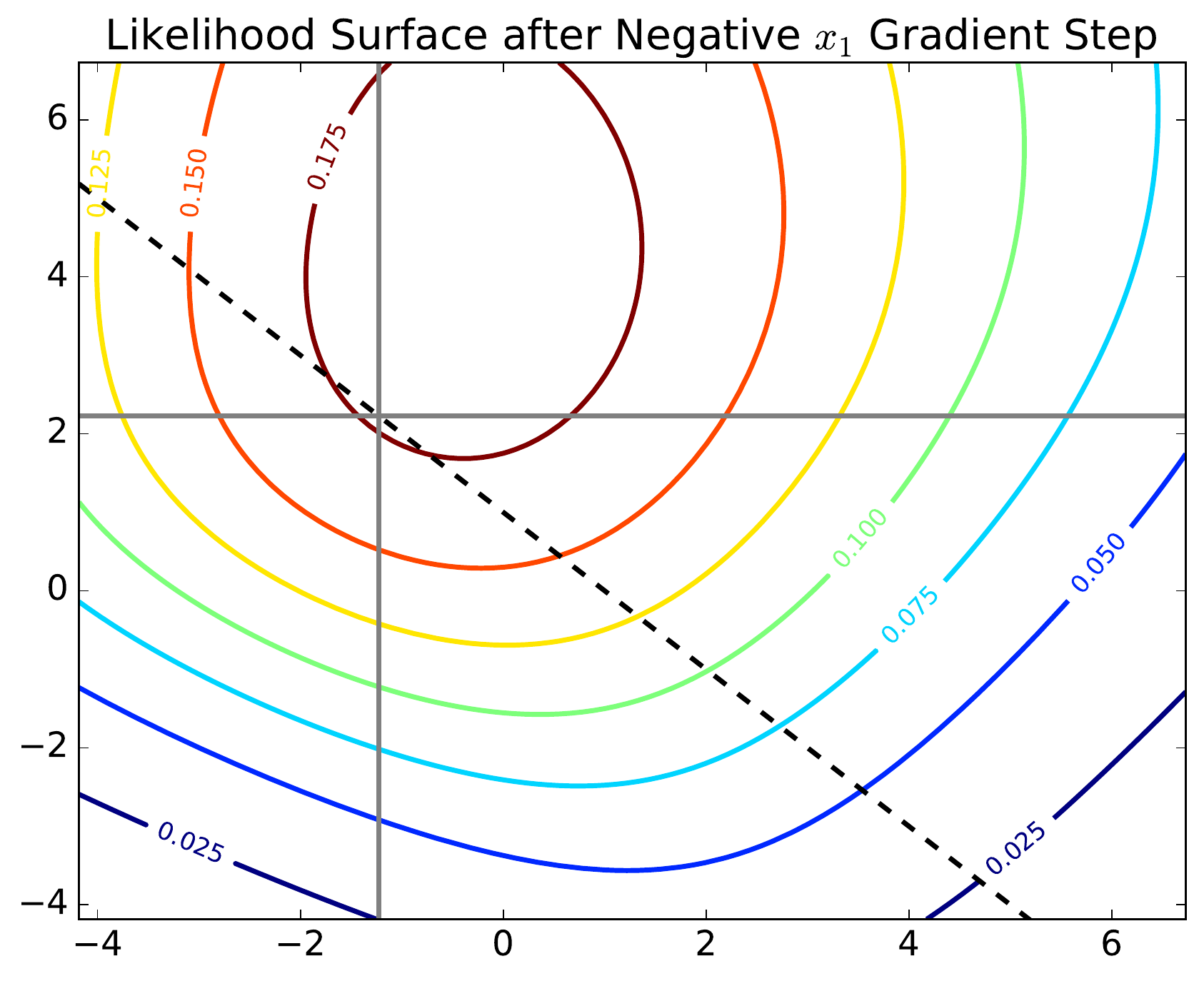}
    \\
    {\small (a) $\bx^\star = (0.151, 0.849)$} &
    {\small (b) $\bx^\star = (-1.23, 2,23)$}
  \end{tabular}
  \caption{\label{fig:eq_ml_example} Example maximum-likelihood
    surfaces $\ell_i(\bx)$ before and after taking a small step on all
    parameters in the negative gradient direction for $x^\star_1$ with
    constraint $\ones^T \bx = 1$.}
\end{figure}

Following \secref{sec:ineq_constraints} we can compute approximate
gradients as
\begin{align}
  \frac{\partial \bg_i}{\partial a_{jk}}
  &\approx \begin{cases}
    (H + \nabla^2 \phi(\bx^\star))^{-1} (\ell_i(\bx^\star) - 1) \be_k,
    & i = j
    \\
    \ell_j(\bx^\star) (H + \nabla^2 \phi(\bx^\star))^{-1} 
    \left(x^\star_k(\ba_j - \bar{\ba}) + \be_k\right),
    & i \neq j
  \end{cases}
\end{align}
and
\begin{align}
  \frac{\partial \bg_i}{\partial b_{j}}
  &\approx \begin{cases}
    0, & i = j
    \\
    \ell_j(\bx^\star) (H + \nabla^2 \phi(\bx^\star))^{-1}
    \left(\ba_j - \bar{\ba}\right), & i \neq j    
  \end{cases}
\end{align}
where $H$ and $\bar{\ba}$ are as defined in \secref{sec:ml_example},
and the second derivative for the log-barrier $\phi(\bx) = \log\left(1
- \|\bx\|^2\right)$ is given by
\begin{align}
  \nabla^2 \phi(\bx) &=
  \frac{4}{\left(1 - \|\bx\|^2\right)^2} \bx \bx^T
  - \frac{2}{1 - \|\bx\|^2} I_{n \times n}
\end{align}

Similar to previous examples, \figref{fig:ineq_ml_example} shows the
maximum-likelihood surfaces and corresponding solutions before and
after taking a gradient step (on all parameters) in the negative $x_1$
direction.

\begin{figure}
  \begin{tabular}{cc}
    \includegraphics[width=0.45\textwidth]{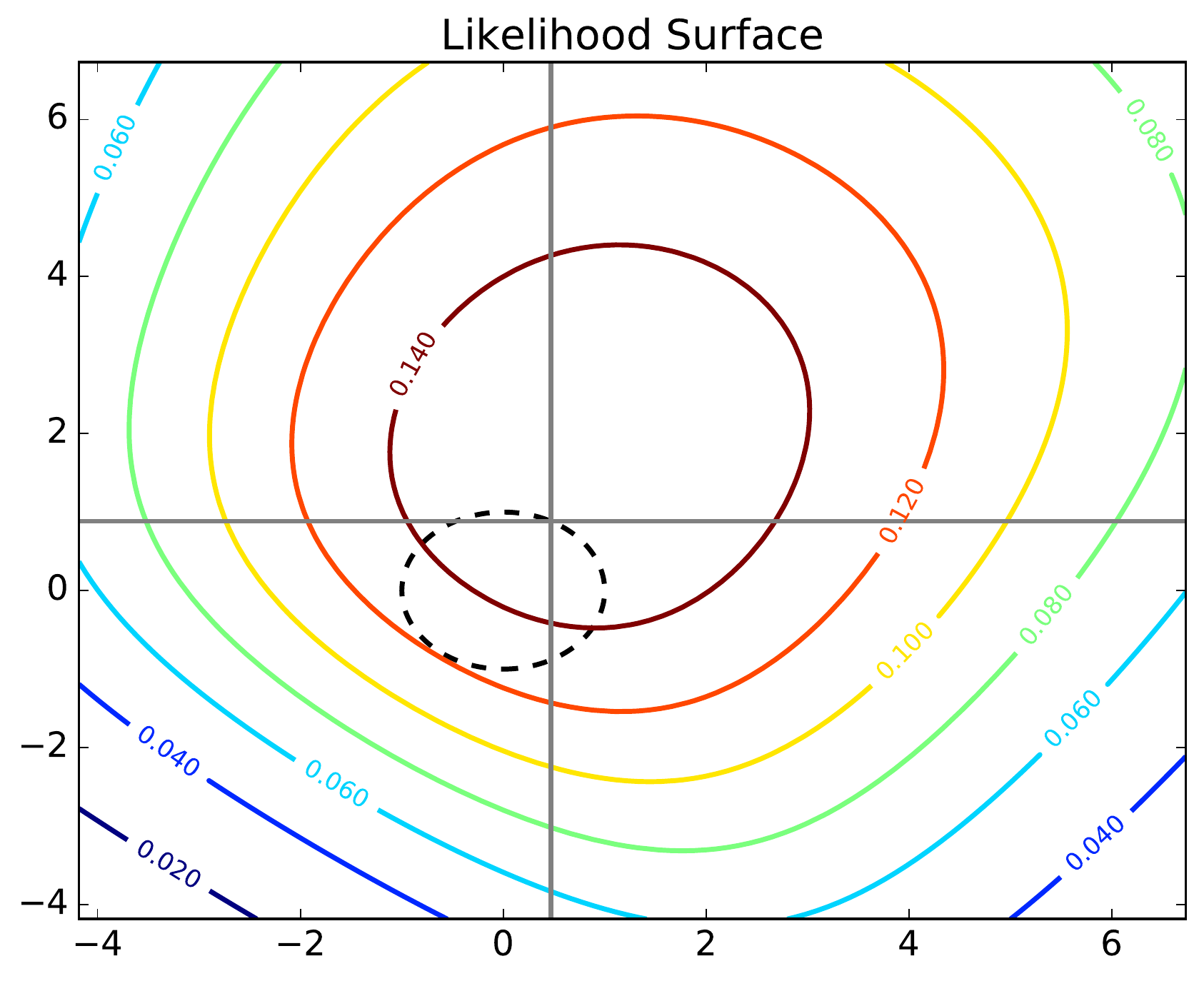} &
    \includegraphics[width=0.45\textwidth]{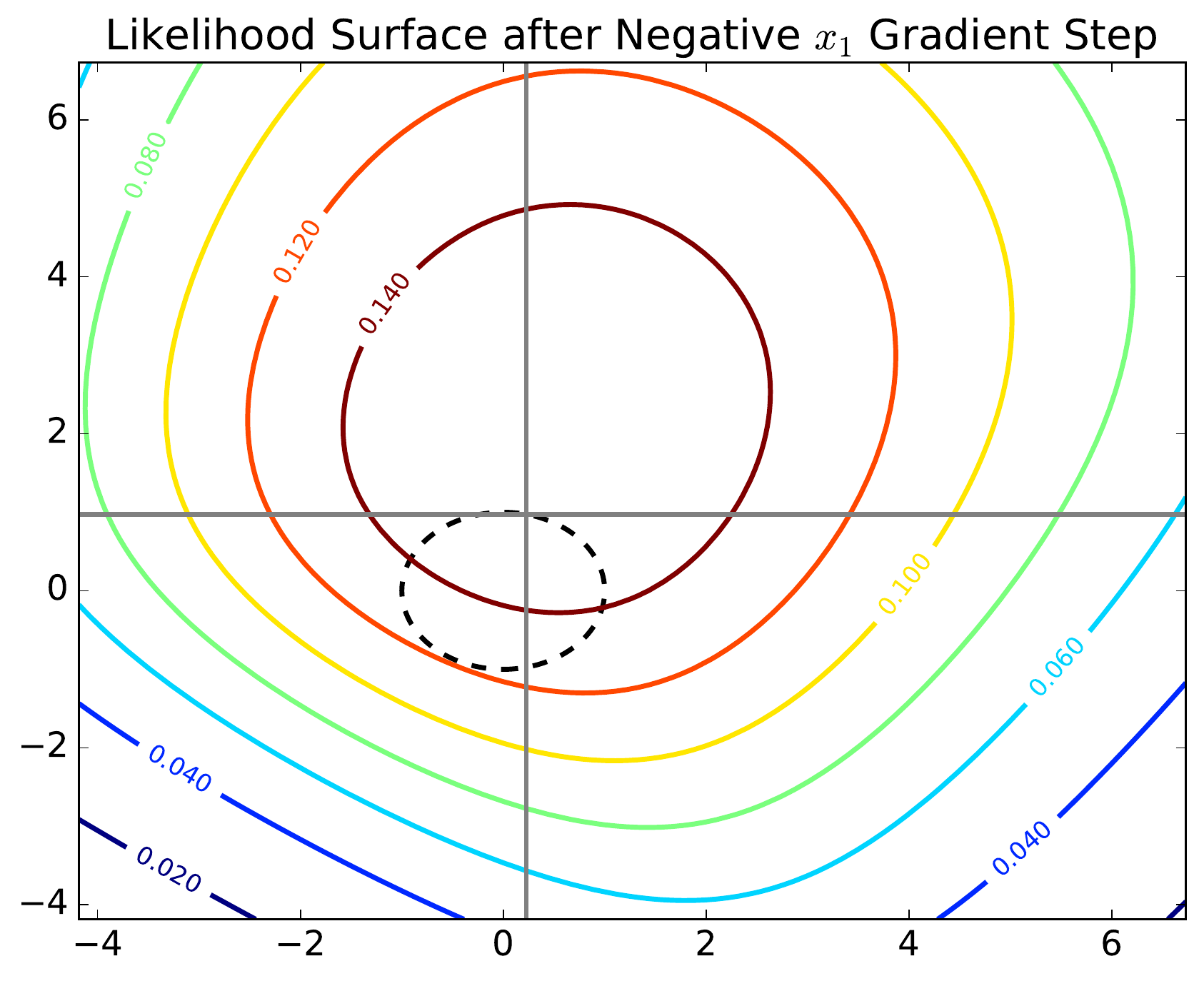}
    \\
    {\small (a) $\bx^\star = (0.467, 0.884)$} &
    {\small (b) $\bx^\star = (0.227, 0.973)$}
  \end{tabular}
  \caption{\label{fig:ineq_ml_example} Example maximum-likelihood
    surfaces $\ell_i(\bx)$ before and after taking a small step on all
    parameters in the negative gradient direction for $x^\star_1$ with
    constraint $\|\bx\|_2 \leq 1$.}
\end{figure}

\section{Bi-level Optimization Example}
\label{sec:bilevel_example}

We now build on previous examples to demonstrate the application of
the above results to the problem of bi-level optimization. We consider
the constrained maximum-likelihood problem from
\secref{sec:constrained_ml_example} with the goal of optimizing
parameter to achieve a desired location for the maximum-likelihood
feature vectors. We consider a three class problem over two
dimensional space. Here we use the constrained version of the problem
since for a three class soft-max classifier over two-dimensional space
there will always be a direction in which the likelihood tends to one
as the magnitude of the maximum-likelihood feature vector $\bx$ tends
to infinity.

Let the target location for the $i$-th maximum-likelihood feature
vector be denoted by $\bt_i$ and given. Optimizing the parameters of
the classifier to achieve these locations (with the maximum-likelihood
feature vectors constrained to the unit ball centered at the origin)
can be formalised as
\begin{align}
  \begin{array}{ll}
    \text{minimize}_{\Theta} & \displaystyle \frac{1}{2} \sum_{i=1}^{m} \| \bg_i(\Theta) - \bt_i \|^2
    \\
    \text{subject to} & \bg_i(\Theta) = \displaystyle \argmax_{\bx : \|\bx\|_2 \leq 1} \log \ell_i(\bx; \Theta)
  \end{array}
\end{align}

Solving by gradient descent gives updates
\begin{align}
  \theta^{(t + 1)} &\leftarrow \theta^{(t)} - 
  \eta \sum_{i=1}^{m} \left(\bg_i(\theta) - \bt_i\right)^T \bg'_i(\theta)
\end{align}
for any $\theta \in \Theta = \{(a_{ij}, b_i)\}_{i=1}^{m}$. Here $\eta$
is the step size.

Example likelihood surfaces for the three classes in our example are
shown in \figref{fig:bi_level} for both initial parameters and final
(optimized) parameters where we have set the target locations to be
evenly spaced around the unit circle. Notice that this is achieved by
the final parameter settings. Also shown in \figref{fig:bi_level}(c)
is the learning curve (in log-scale). Here we see a rapid decrease in
the objective in the first 20 iterations and final convergence (to
within $10^{-9}$ of the optimal value) in under 100 iterations.

\begin{figure}
  \centering
  \begin{tabular}{cc}
    \begin{minipage}{0.58\textwidth}
      \centering
      \includegraphics[width=\textwidth]{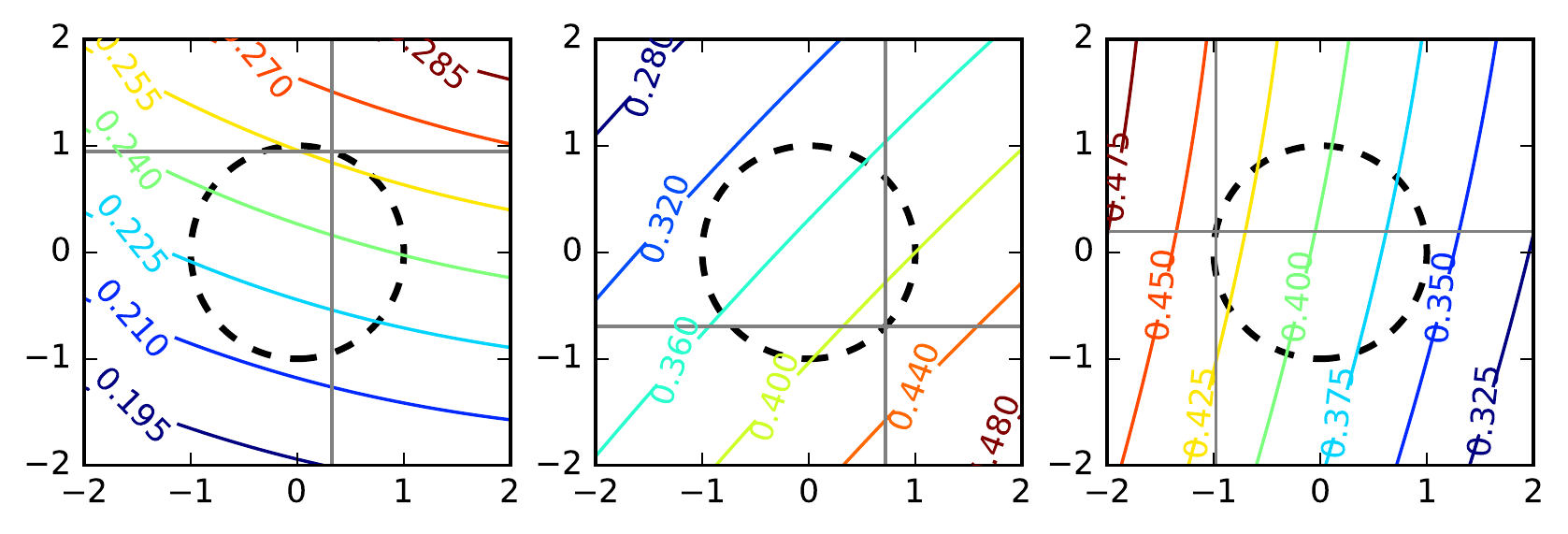} \\
      {\small (a) Initial} \\
      \includegraphics[width=\textwidth]{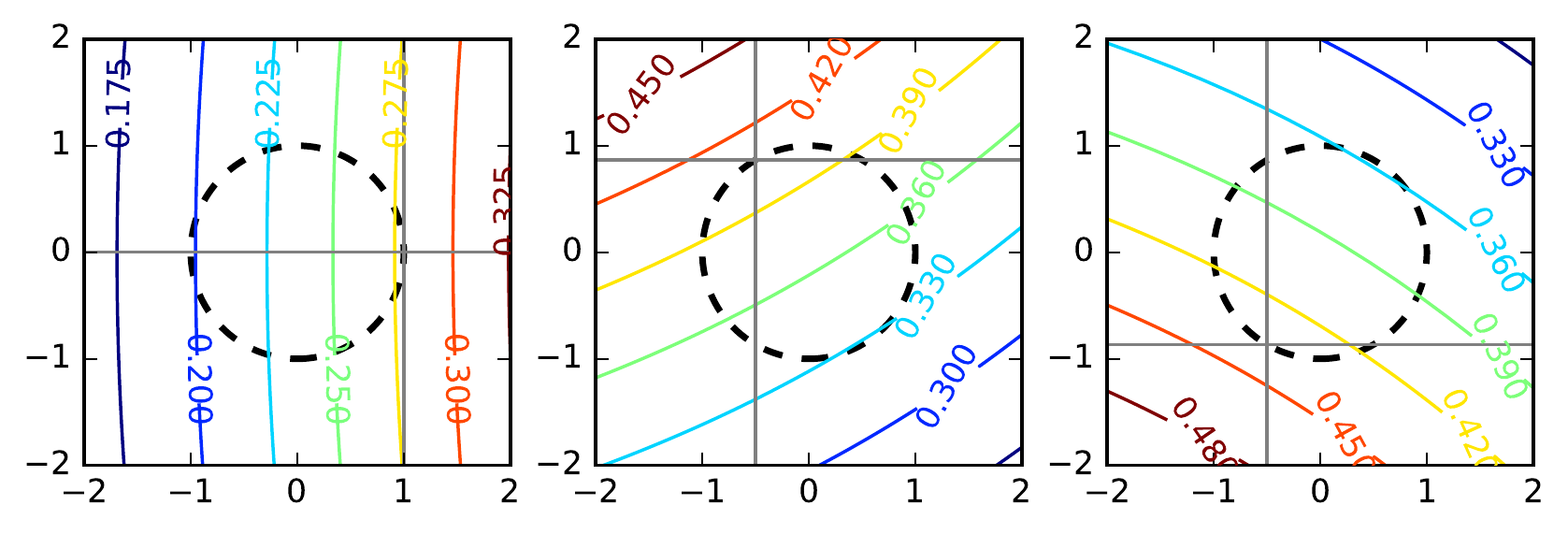} \\
      {\small (b) Final}
    \end{minipage}
    &
    \begin{minipage}{0.40\textwidth}
      \centering
      \includegraphics[width=\textwidth]{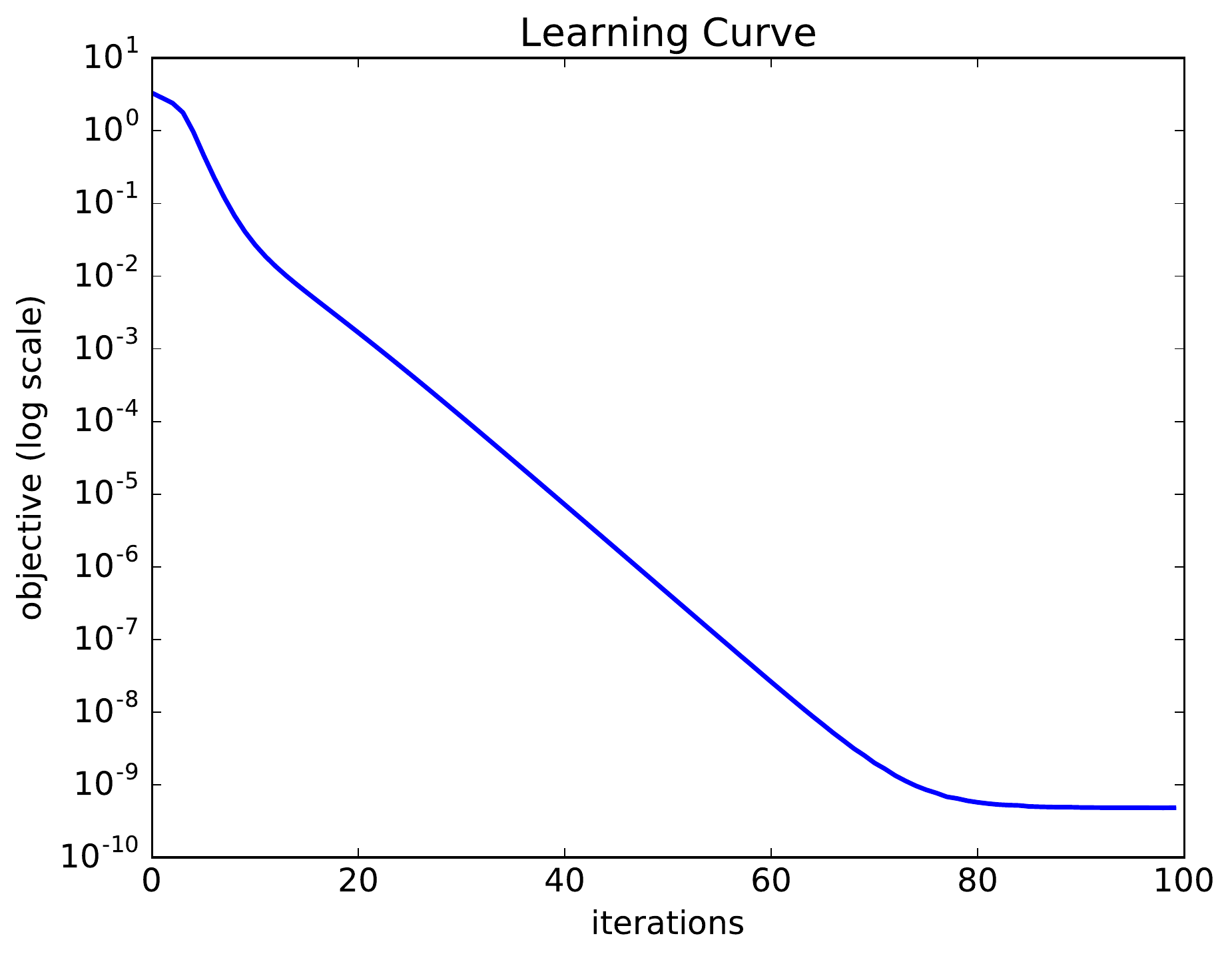} \\
      {\small (c) Learning Curve}
    \end{minipage}
  \end{tabular}
  \caption{\label{fig:bi_level} Example bi-level optimization to
    position maximum-likelihood points on inequality constrained
    soft-max model. Shown are the initial and final maximum-likelihood
    surfaces $\ell_i(\bx)$ for $i = 1, 2, 3$ and constraint $\|\bx\|_2
    \leq 1$ in (a) and (b), respectively. Also shown is the learning
    curve in (c) where the objective value is plotted on a log-scale
    versus iteration number.}
\end{figure}
  

\section{Discussion}

We have presented results for differentiating parameterized $\argmin$
and $\argmax$ optimization problems with respect to their
parameters. This is useful for solving bi-level optimization problems
by gradient descent~\cite{Bard:1998}. The results give exact gradients
but (i) require that function being optimized (within the $\argmin$ or
$\argmax$) be smooth and (ii) involve computing a Hessian matrix
inverse, which could be expensive for large-scale problems. However,
in practice the methods can be applied even on non-smooth functions by
approximating the function or perturbing the current solution to a
nearby differentiable point. Moreover, for large-scale problems the
Hessian matrix can be approximated by a diagonal matrix and still give
a descent direction as was recently shown in the context of
convolutional neural network (CNN) parameter learning for video
recognition via stochastic gradient descent~\cite{Fernando:ICML2016}.

The problem of solving non-smooth large-scale bi-level optimization
problems, such as in CNN parameter learning for video recognition,
present some interesting directions for future research. First, given
that the parameters are likely to be changing slowly for any
first-order gradient update it would be worth investigating whether
warm-start techniques would be effectivey for speeding up gradient
calculations. Second, since large-scale problems often employ
stochastic gradient procedures, it may only be necessary to find a
descent direction rather than the direction of steepest descent. Such
an approach may be more computationally efficient, however it is
currently unclear how such a direction could be found (without first
computing the true gradient). Last, the results reported herein are
based on the optimal solution to the lower-level problem. It would be
interesting to explore whether non-exact solutions could still lead to
descent directions, which would greatly improve efficiency for
large-scale problems, especially during the early iterations where the
parameters are likely to be far from their optimal values.

Models that can be trained end-to-end using gradient-based techniques
have rapidly become the leading choice for applications in computer
vision, natural language understanding, and other areas of artificial
intelligence. We hope that the collection of results and examples
included in this technical report will help to develop more expressive
models---specifically, ones that include optimization
sub-problems---that can still be trained in an end-to-end fashion. And
that these models will lead to even greater advances in AI
applications into the future.


{
  \setlength{\bibsep}{0pt}
  \bibliographystyle{abbrvnat}
  \bibliography{long,scene}
}

\end{document}